\newtheorem{theorem}{\bf Theorem}[section]
\newtheorem{lemma}[theorem]{\bf Lemma}
\newtheorem{corollary}[theorem]{\bf Corollary}
\newtheorem{example}[theorem]{\bf Example}
\newtheorem{definition}[theorem]{\bf Definition}
\newenvironment{proof}{\noindent{\em Proof:}}{\quad \hfill$\Box$\vspace{2ex}}
\def \bR {\mathbb R}
\def \bN {\mathbb N}
\def \bZ {\mathbb Z}
\def \bC {\mathbb C}
\def \bP {\mathbb P}
\def \bx {{\bf x}}
\def \bz {{\bf z}}
\def \bt {{\bf t}}
\def \cB {{\cal B}}
\def \cL {{\cal L}}
\def \cH {{\cal H}}
\def \cE {{\cal E}}
\def \cF {{\cal F}}
\def \cN {{\cal N}}
\def \cW {{\cal W}}
\def \cM {{\cal M}}
\def \cG {{\cal G}}
\def \and {\, \mbox{\rm and}\, }
\def \span {\,{\rm span}\,}
\def \supp {\,{\rm supp}\,}
\def \sgn {\,{\rm sgn}\,}
\begin{document}
\title{\bf On Reproducing Kernel Banach Spaces: Generic Definitions and Unified Framework of Constructions}
\author{Rongrong Lin\thanks{School of Mathematics and Statistics, Guangdong University of Technology, Guangzhou 510520, P. R. China. E-mail address: {\it linrr@gdut.edu.cn}.
Supported in part by Natural Science Foundation of China under grant 11901595.},
\quad Haizhang Zhang\thanks{School of Mathematics (Zhuhai), and Guangdong Province Key Laboratory of Computational Science, Sun Yat-sen University, Zhuhai 519082, P. R. China. E-mail address: {\it zhhaizh2@mail.sysu.edu.cn}. Supported in part by Natural Science Foundation of China under grant 11971490, and by Natural Science Foundation of Guangdong Province under grant 2018A030313841.}, \quad and\quad Jun Zhang\thanks{Department of Psychology and Department of Mathematics, University of Michigan, Ann Arbor, MI 48109, USA. E-mail address: {\it junz@umich.edu}. Supported in part by DARPA/ARO under grant \#W911NF-16-1-0383.}}

\date{}
\maketitle
\begin{abstract} Recently, there has been emerging interest in constructing reproducing kernel Banach spaces (RKBS) for applied and theoretical purposes such as machine learning, sampling reconstruction, sparse approximation and functional analysis. Existing constructions include the reflexive RKBS via a bilinear form, the semi-inner-product RKBS, the RKBS with $\ell^1$ norm, the $p$-norm RKBS via generalized Mercer kernels, etc. The definitions of RKBS and the associated reproducing kernel in those references are dependent on the construction. Moreover, relations among those constructions are unclear. We explore a generic definition of RKBS and the reproducing kernel for RKBS that is independent of construction. Furthermore, we propose a framework of constructing RKBSs that leads to new RKBSs based on Orlicz spaces and unifies existing constructions mentioned above via a continuous bilinear form and a pair of feature maps.
Finally, we develop representer theorems for machine learning in RKBSs constructed in our framework, which also unifies representer theorems in existing RKBSs.

\noindent{\bf Keywords:}  Reproducing kernel Banach spaces, feature maps, reproducing kernels, machine learning, the representer theorem
\end{abstract}

\section{Introduction}
In this paper, we  aim at the construction of reproducing kernel Banach spaces (RKBSs), which serve as a generalization of reproducing kernel Hilbert spaces (RKHSs) \cite{SS02}. The notion of RKBS was originally introduced in machine learning in 2009 \cite{ZXZ09}. Since then, various RKBSs \cite{FHY15,GSP14,SZ11,SZH13,XY,Ye2013,ZXZ09,ZZ12} have been constructed for different applied and theoretical purposes. They are used in a wide variety of fields such as machine learning \cite{XY,Ye2013,ZXZ09,ZZ12,ZZ13}, sampling reconstruction \cite{GA13,GHM14,HNS09,NS10,NSX13}, sparse approximation \cite{SZ11,SZH13,XY,Ye2013}, and functional analysis \cite{DJ16,SFL11,ZZ11}.
The definitions of existing RKBSs and the associated reproducing kernels in the literature are dependent on the construction.
To address this issue and to further promote research in the subject, it is helpful to understand the essence of RKBS and to propose a framework of constructing RKBSs that unifies existing constructions.

RKHSs are Hilbert spaces of functions on which point evaluation functionals are continuous \cite{SS02}. In machine learning, RKHSs have been viewed as ideal spaces for kernel-based learning algorithms. Thanks to the existence of an inner product, Hilbert spaces  are well-understood in functional analysis. Many applications require that the sampling process to be stable. In other words, the point evaluations should be bounded. Most importantly, an RKHS has a reproducing kernel, which measures similarity between input data and gives birth to the ``kernel trick" in machine learning that significantly saves computations. Successful and important machine learning methods based on RKHSs include support vector machines and the regularization networks \cite{SS02,SC08}.

There are many reasons that justify the need of RKBSs. We mention three here. First of all, Banach spaces possess richer geometrical structures and norms.  It is well-known that any two Hilbert spaces over $\bC$ of the same dimension are isometrically isomorphic. By contrast,  for $1\le p\ne q\le+\infty$, $L^p([0,1])$ and $L^q([0,1])$ are not  isomorphic to each other (see, \cite{Fabian01}, page 180). Secondly, kernel functions play the role of measuring similarity of inputs in machine learning. They are defined by inner product through a feature map and therefore are inherently symmetric. In some applications such as psychology \cite{ZZBook}, asymmetric kernels are desired, which can only be obtained via Banach spaces. Thirdly, machine learning schemes in Banach spaces have received considerable attention recently \cite{AMP10,DL07, MP04,MP07,SZ11,SZH13, XY,Ye2013, ZXZ09, ZZ12, ZZ13}.   Many important problems such as $p$-norm coefficient-based regularization \cite{SFZ11, SZH13,TCY10,XZ10},  large-margin classification \cite{DL07,Ye2013,ZXZ09}, lasso in statistics \cite{Tibshirani96} and compressed sensing \cite{CRT06} had better be studied in Banach spaces. It suggests the need of extending Hilbert space type arguments to Banach spaces.

Under these considerations, different definitions and constructions of RKBSs have been proposed in the literature. In 2009, Zhang et al. \cite{ZXZ09} proposed the new concept of RKBS for machine learning. Reflexive RKBSs were constructed by the bilinear form between a reflexive Banach space and its dual. The semi-inner-product RKBS was also studied in \cite{ZXZ09} and \cite{ZZ11,ZZ12} based on the tool of semi-inner products \cite{Giles67,Lumer61}.  For the multi-task learning, Zhang et al. \cite{ZZ13} developed the notion of semi-inner-product vector-valued RKBS. In 2013, Song et al. \cite{SZH13} constructed a class of RKBSs with the $\ell^1$ norm via admissible kernels targeting at sparse learning. In those spaces, the representer theorem for regularized learning schemes is satisfied.  In 2014, Georgiev et al. \cite{GSP14} constructed a class of RKBSs with the $p$-norm ($1\le p\le +\infty$) without the representer theorem. In 2015, Fasshauer et al. \cite{FHY15} constructed a class of RKBSs with positive definite functions. More recently, the $p$-norm ($1\le p\le +\infty$) RKBSs were systematically developed by Xu and Ye in \cite{XY} via generalized Mercer kernels.

The definitions of RKBSs and the associated reproducing kernels in the above references are dependent on the construction. Moreover, the relation among those constructions is unclear. Limitations also persist. For instance, the Banach space $C([0,1])$ of all continuous functions on the interval $[0,1]$ does not satisfy those definitions. However, $C([0,1])$ should be an RKBS as point evaluations are clearly continuous in the space. We hope to propose a framework of constructing RKBSs that unifies existing constructions, and leads further to new constructions that accept $C([0,1])$ as a particular example. The main purpose of this paper is two-fold:
\begin{description}
\item[(i)] to give a generic definition of RKBS that naturally generalizes the classical RKHS and is independent of construction;
\item[(ii)] to propose a unified framework of constructing RKBSs that covers all existing constructions in the literature and also leads to new RKBSs.
\end{description}

The outline of the paper is as follows. In Section 2, we first give a generic definition of RKBSs, and then propose a novel framework of constructing RKBSs via a pair of feature maps. Furthermore, we are able to construct a new class of Orlicz RKBSs. In Section 3, we justify that our framework does unify existing RKBSs in the literature. In particular, $C([0,1])$ is included. In the last section, we develop a representer theorem for regularization networks in RKBSs constructed in our framework.

\section{Generic definitions and constructions}

In this section, we first present a generic definition of RKBS and the reproducing kernel for RKBS. Then, we propose a novel framework of constructing RKBSs via a pair of feature maps. Finally, we construct new RKBSs based on Orlicz spaces.

\subsection{Generic definitions of RKBSs}

We describe our generic definition of RKBSs as follows.

\begin{definition}\label{definition1} ({\bf Reproducing Kernel Banach Spaces (RKBS)}) A {\it reproducing kernel Banach space}  $\cB$ on a prescribed nonempty set $X$ is a Banach space of certain functions on $X$ such that every point evaluation functional $\delta_x$, $x\in X$ on $\cB$ is continuous, that is, there exists a positive constant $C_x$ such that
$$
|\delta_x (f)|=|f(x)|\le C_x\|f\|_{\cB}\mbox{ for all }f\in\cB.
$$
\end{definition}
Note that a normed vector space $V$ on $X$ is called a {\it Banach space of functions} if it is a Banach space whose elements are functions on $X$ and for each $f\in V$, $\|f\|_{V}=0$ if and only if $f$, as a function, vanishes everywhere on $X$. By definition, $L^p([0,1])$, $1\le p\le +\infty$, is  not a Banach space of functions as it consists of equivalent classes of functions with respect to the Lebesgue measure.

Definition \ref{definition1} naturally generalizes the classical definition of RKHS. We should point out that such a definition was implicitly mentioned in some papers on sampling theorems \cite{Christensen12,GA13,HNS09}, although no reproducing kernels were defined or even mentioned therein.  We also point out that the authors in \cite{ZXZ09} had also intended to use this definition for RKBSs, but eventually gave it up due to the example $C([0,1])$. A semi-inner-product structure was imposed in \cite{ZXZ09} to ensure the existence of a reproducing kernel.

By Definition \ref{definition1}, $C([0,1])$ is an RKBS. We shall see what its reproducing kernels look like in Subsection \ref{Subsection3.5}. The only requirement on RKBSs in our definition is continuity of point evaluations.
Definitions of RKBSs in existing literature \cite{SZH13,XY,ZXZ09,ZZ11} all impose other requirements to ensure the existence of a reproducing kernel that is not a generalized function. Those requirements more or less seem unnatural. We are able to remove them by exploiting the definition of reproducing kernels via continuous bilinear forms.

A bilinear form between two normed vector spaces $V_1,V_2$ is a function $\langle \cdot,\cdot\rangle_{V_1\times V_2}$ from $V_1\times V_2$ to $\bC$  that is linear about both arguments. It is said to be {\it continuous}  if  there exists a positive constant $C$ such that
$$
|\langle f,g\rangle_{V_1\times V_2}| \le C\|f\|_{V_1}\|g\|_{V_2}\mbox{ for all }f\in V_1,g\in V_2.
$$

\begin{definition}\label{definition2} ({\bf Reproducing Kernels for RKBS}) Let $\cB_1$ be an RKBS on a set $\Omega_1$. If there exists a Banach space $\cB_2$ of functions on another set $\Omega_2$, a continuous bilinear form  $\langle \cdot,\cdot\rangle_{\cB_1\times\cB_2}$, and a function $K$ on $\Omega_1\times\Omega_2$ such that $K(x,\cdot)\in\cB_2$ for all $x\in\Omega_1$ and
 \begin{equation}\label{Reproducingeq1}
f(x)=\langle f,K(x,\cdot)\rangle_{\cB_1\times\cB_2} \mbox{ for all }x\in\Omega_1\mbox{ and all }f\in\cB_1,
\end{equation}
then we call $K$ a {\it reproducing kernel} for $\cB_1$. If, in addition, $\cB_2$ is also an RKBS on $\Omega_2$ and it holds $K(\cdot,y)\in\cB_1$ for all $y\in\Omega_2$ and
 \begin{equation}\label{Reproducingeq2}
g(y)=\langle K(\cdot,y),g\rangle_{\cB_1\times\cB_2}\mbox{ for all }y\in\Omega_2\mbox{ and all }g\in\cB_2,
\end{equation}
then we call $\cB_2$ an {\it adjoint RKBS} of $\cB_1$ and call $\cB_1$ and $\cB_2$ {\it a pair of RKBSs}. In this case, $\widetilde{K}(x,y):=K(y,x)$, $x\in\Omega_2,y\in\Omega_1$, is a reproducing kernel for $\cB_2$.
\end{definition}

We call (\ref{Reproducingeq1}) and (\ref{Reproducingeq2})  the {\it reproducing properties} for the kernel $K$ in RKBSs $\cB_1$ and $\cB_2$, respectively.

Although there are many conditions in the definition, we shall see that RKBSs and reproducing kernels satisfying the conditions can be easily constructed via a pair of feature maps.

\subsection{Constructions via a pair of feature maps}

We shall propose a unified framework of constructing RKBSs via a pair of feature maps. We shall discuss in the next section that all existing constructions of RKBSs fall into the framework.

Let $\langle \cdot,\cdot\rangle_{\cW_1\times\cW_2}$ be a continuous bilinear form $\langle \cdot,\cdot\rangle_{\cW_1\times\cW_2}$ on Banach spaces $\cW_1$ and $\cW_2$. We call the linear span $\span A$ of a set $A\subseteq \cW_1$ dense in $\cW_1$ with respect to the bilinear form $\langle \cdot,\cdot\rangle_{\cW_1\times\cW_2}$ if for any $v\in\cW_2$,
$$
\langle a,v\rangle_{\cW_1\times\cW_2}=0\mbox{ for all }a\in A
$$
implies $v=0$. Similarly, we can define denseness in $\cW_2$ with respect to the bilinear form.

Our construction is described below.

\begin{center}
\bf Constructions of RKBSs
\end{center}
\begin{center}
\begin{boxedminipage}{15cm}
Let $\cW_1,\cW_2$ be two Banach spaces, and $\langle \cdot,\cdot\rangle_{\cW_1\times\cW_2}$ be a continuous bilinear form on $\cW_1\times\cW_2$.
Suppose there exist two nonempty sets $\Omega_1$ and $\Omega_2$, and mappings
$$
\Phi_1:\Omega_1\to\cW_1, \ \ \Phi_2:\Omega_2\to\cW_2
$$
such that with respect to the bilinear form
\begin{equation}\label{density*}
\span\Phi_1(\Omega_1) \mbox{ is dense in }\cW_1,\  \span\Phi_2(\Omega_2) \mbox{ is dense in }\cW_2.
\end{equation}
We construct
\begin{equation}\label{B1}
\cB_1:=\Big\{f_v(x):=\langle \Phi_1(x),v\rangle_{\cW_1\times\cW_2}: v\in\cW_2, x\in\Omega_1\Big\}
\end{equation}
with norm $$\|f_v\|_{\cB_1}:=\|v\|_{\cW_2}$$ and
\begin{equation}\label{B2}
\cB_2:=\Big\{g_u(y):=\langle u,\Phi_2(y)\rangle_{\cW_1\times\cW_2}: u\in\cW_1, y\in\Omega_2\Big\}
\end{equation}
with norm $$\|g_u\|_{\cB_2}:=\|u\|_{\cW_1}.$$
\end{boxedminipage}
\end{center}
$\\$

The construction can be simplified when $\cW_2$ is a closed subspace of $\cW_1^*$,  the dual space of continuous linear functionals on $\cW_1$. In this case, we always use the natural continuous bilinear form
$$
\langle u,v\rangle_{\cW_1\times\cW_2}=\langle u,v\rangle_{\cW_1}:=v(u),\ \ u\in\cW_1,\ v\in\cW_2.
$$
The denseness condition (\ref{density*}) is satisfied if
\begin{equation}\label{density}
\cW_1=\overline{\span}\Phi_1(\Omega_1),\  \cW_1^*=\overline{\span} \Phi_2(\Omega_2)
\end{equation}
or
\begin{equation}\label{densityweak}
\cW_1=\overline{\span}\Phi_1(\Omega_1),\mbox{ and }\span\Phi_2(\Omega_2)\mbox{ is dense in }\cW_1^*\mbox{ under the weak}^*\mbox{ topology}.
\end{equation}

We show that such constructed spaces $\cB_1$ and $\cB_2$ indeed form a pair of RKBSs, and present the associated reproducing kernel.

\begin{theorem}\label{Theorem1} Let $\cB_1$ and $\cB_2$ be constructed as in (\ref{B1}) and (\ref{B2}), respectively. Then with the bilinear form on $\cB_1\times\cB_2$
\begin{equation}\label{continuousform1}
\langle f_v,g_u\rangle_{\cB_1\times \cB_2}:= \langle u,v\rangle_{\cW_1\times\cW_2} \mbox{ for all }f_v\in\cB_1\mbox{ and all }g_u\in\cB_2,
\end{equation}
$\cB_1$ is an RKBS on $\Omega_1$ with the adjoint RKBS $\cB_2$ on $\Omega_2$. Moreover,
\begin{equation}\label{kernel}
K(x,y):=\langle \Phi_1(x),\Phi_2(y)\rangle_{\cW_1\times \cW_2},\ x\in\Omega_1,y\in\Omega_2,
\end{equation}
is a reproducing kernel for $\cB_1$.
\end{theorem}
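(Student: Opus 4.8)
The plan is to exploit that, by construction, $\cB_1$ and $\cB_2$ are merely isometric copies of $\cW_2$ and $\cW_1$, so that essentially all the analytic content is inherited from the ambient spaces and from the continuity of $\langle\cdot,\cdot\rangle_{\cW_1\times\cW_2}$.

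First I would show that $v\mapsto f_v$ and $u\mapsto g_u$ are well-defined linear bijections onto $\cB_1$ and $\cB_2$. Surjectivity is immediate from \eqref{B1} and \eqref{B2}, and linearity follows from bilinearity of $\langle\cdot,\cdot\rangle_{\cW_1\times\cW_2}$. The crux is injectivity, which is precisely where the denseness hypothesis \eqref{density*} enters and which I expect to be the main (indeed the only delicate) obstacle. If $f_v\equiv 0$ on $\Omega_1$, then $\langle\Phi_1(x),v\rangle_{\cW_1\times\cW_2}=0$ for every $x\in\Omega_1$, hence $\langle a,v\rangle_{\cW_1\times\cW_2}=0$ for all $a\in\span\Phi_1(\Omega_1)$; denseness of $\span\Phi_1(\Omega_1)$ in $\cW_1$ with respect to the bilinear form then forces $v=0$. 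This is exactly what makes the prescription $\|f_v\|_{\cB_1}:=\|v\|_{\cW_2}$ unambiguous and the form \eqref{continuousform1} well-defined. The symmetric argument, using denseness of $\span\Phi_2(\Omega_2)$ in $\cW_2$, handles $u\mapsto g_u$.

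Granting injectivity, $v\mapsto f_v$ is an isometric isomorphism of the Banach space $\cW_2$ onto $\cB_1$, so $\cB_1$ is complete, and $\|f_v\|_{\cB_1}=0$ iff $v=0$ iff $f_v\equiv 0$; thus $\cB_1$ is a Banach space of functions in the sense of Definition \ref{definition1}, and likewise for $\cB_2$. Continuity of \eqref{continuousform1} is then read off directly from that of the ambient form: with $C$ its continuity constant, $|\langle f_v,g_u\rangle_{\cB_1\times\cB_2}|=|\langle u,v\rangle_{\cW_1\times\cW_2}|\le C\|u\|_{\cW_1}\|v\|_{\cW_2}=C\|g_u\|_{\cB_2}\|f_v\|_{\cB_1}$.

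Finally I would identify the kernel sections and verify the reproducing properties. Taking $u=\Phi_1(x)$ in \eqref{B2} shows $K(x,\cdot)=g_{\Phi_1(x)}\in\cB_2$, and \eqref{continuousform1} gives $\langle f_v,K(x,\cdot)\rangle_{\cB_1\times\cB_2}=\langle\Phi_1(x),v\rangle_{\cW_1\times\cW_2}=f_v(x)$, which is \eqref{Reproducingeq1}; continuity of $\delta_x$ then follows from $|f_v(x)|\le C\|\Phi_1(x)\|_{\cW_1}\|f_v\|_{\cB_1}$, so $\cB_1$ is an RKBS with reproducing kernel $K$. Symmetrically, $K(\cdot,y)=f_{\Phi_2(y)}\in\cB_1$ and $\langle K(\cdot,y),g_u\rangle_{\cB_1\times\cB_2}=\langle u,\Phi_2(y)\rangle_{\cW_1\times\cW_2}=g_u(y)$ yields \eqref{Reproducingeq2}, while $|g_u(y)|\le C\|\Phi_2(y)\|_{\cW_2}\|g_u\|_{\cB_2}$ makes $\cB_2$ an RKBS, establishing that $\cB_2$ is an adjoint RKBS of $\cB_1$.
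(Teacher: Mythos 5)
Your proposal is correct and follows essentially the same route as the paper's proof: use the denseness condition \eqref{density*} to make $v\mapsto f_v$ and $u\mapsto g_u$ injective (hence the norms and the form \eqref{continuousform1} well-defined), bound point evaluations by $C\|\Phi_1(x)\|_{\cW_1}$ and $C\|\Phi_2(y)\|_{\cW_2}$, and identify $K(x,\cdot)=g_{\Phi_1(x)}$, $K(\cdot,y)=f_{\Phi_2(y)}$ to read off the reproducing properties. The only difference is that you spell out the injectivity and completeness details slightly more explicitly than the paper does.
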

\begin{proof}
First note that the denseness condition (\ref{density*}) guarantees that the $v,u$ in $f_v$ and $g_u$ are both unique. The definition is hence well-defined. We next prove that $\cB_1$ and $\cB_2$ are RKBSs. By assumption, there exists a positive constant $C$ such that
\begin{equation}\label{continuousform}
|\langle u,v\rangle_{\cW_1\times\cW_2}|\le C\|u\|_{\cW_1}\|v\|_{\cW_2}\mbox{ for all }u\in\cW_1\mbox{ and all }v\in\cW_2.
\end{equation}
By (\ref{B1}) and (\ref{continuousform}), we have  for all $f_v\in\cB_1$ and all $x\in\Omega_1$
$$
|f_v(x)|=|\langle \Phi_1(x),v\rangle_{\cW_1\times \cW_2}|\le C\|\Phi_1(x)\|_{\cW_1}\|v\|_{\cW_2}=C\|\Phi_1(x)\|_{\cW_1}\|f_v\|_{\cB_1}.
$$
Similarly, by (\ref{B2}) and (\ref{continuousform}), we have for all $g_u\in\cB_2$ and all $y\in\Omega_2$
$$
|g_u(y)|=|\langle u,\Phi_2(y)\rangle_{\cW_1\times\cW_2}|\le C\|\Phi_2(y)\|_{\cW_2}\|u\|_{\cW_1}=C\|\Phi_2(y)\|_{\cW_2}\|g_u\|_{\cB_2}.
$$
Thus, point evaluation functionals are continuous on both $\cB_1$ and $\cB_2$. By Definition \ref{definition1}, $\cB_1$ and $\cB_2$ are RKBSs on $\Omega_1$ and $\Omega_2$, respectively.

Next, $\langle \cdot,\cdot\rangle_{\cB_1\times \cB_2}$ is a continuous bilinear form as
$$
|\langle f_v,g_u\rangle_{\cB_1\times \cB_2}|=|\langle u,v\rangle_{\cW_1\times\cW_2}|\le C\|u\|_{\cW_1}\|v\|_{\cW_2} = C\|f_v\|_{\cB_1}\|g_u\|_{\cB_2}.
$$

Finally, by (\ref{density*}), (\ref{B1}), (\ref{continuousform1}) and (\ref{kernel}), we have
$$
K(x,\cdot)=g_{\Phi_1(x)}\in\cB_2,\ \ f_v(x)=\langle \Phi_1(x),v\rangle_{\cW_1\times\cW_2}=\langle f_v, g_{\Phi_1(x)}\rangle_{\cB_1\times \cB_2}=
\langle f_v, K(x,\cdot)\rangle_{\cB_1\times \cB_2}
$$
for all $x\in\Omega_1$ and all $f_v\in\cB_1$. Similarly,
$$
K(\cdot,y)=f_{\Phi_2(y)}\in\cB_1,\ \ g_u(y)=\langle u,\Phi_2(y)\rangle_{\cW_1\times\cW_2}=\langle f_{\Phi_2(y)},g_u\rangle_{\cB_1\times \cB_2}=
\langle K(\cdot,y),g_u\rangle_{\cB_1\times \cB_2}
$$
for all $y\in\Omega_2$ and all $g_u\in\cB_2$. The proof is complete.
\end{proof}

We call $\Phi_1:\Omega_1\to\cW_1$ and $\Phi_2:\Omega_2\to\cW_2$  in Theorem \ref{Theorem1}  {\it a pair of feature maps} for the reproducing kernel $K$, and $\cW_1$ and $\cW_2$ {\it a pair of feature spaces}  for $K$.  The feature spaces and feature maps for an RKBS might not be unique. Different choices of them will lead to different reproducing kernels for the RKBS.

Below we prove the existence of a reproducing kernel for a general RKBS under the mild condition that it is separable.

\begin{theorem}
Every separable RKBS admits a reproducing kernel.
\end{theorem}
\begin{proof}
Let $\cB$ be a separable RKBS on $X$. Thus, there exist $f_n\in\cB$, $n\in\bN$ such that $\span\{f_n:n\in\bN\}$ is dense in $\cB$. We construct a pair of feature maps and feature spaces for $\cB$ as follows. Let
$$
\cW_1=\overline{\span}\{\delta_x:x\in X\}\mbox{ in }\cB^*,\ \Omega_1=X,\ \ \mbox{and }\Phi_1(x)=\delta_x.
$$
Also, set
$$
\cW_2=\cB,\ \Omega_2=\bN,\ \ \mbox{and }\Phi_2(n)=f_n(x).
$$
Clearly, the denseness condition (\ref{density*}) is satisfied. We observe that the space $\cB_1$ followed from the construction is exactly $\cB$. By Theorem \ref{Theorem1},
$$
K(x,n)=\langle \delta_x,f_n\rangle=f_n(x),\ \ x\in \Omega_1,\ n\in\bN
$$
is a reproducing kernel for $\cB$.
\end{proof}

We remark that the reproducing kernel given in the above theorem is of little use in practice. The purpose of the theorem is to verify that Definition \ref{definition1} is indeed generic for RKBSs. Under the definition, reproducing kernels for RKBSs exist under a very mild condition, namely, separability.

\subsection{Orlicz RKBSs}

To show the advantages of our framework, we construct a new class of RKBSs based on Orlicz spaces. Orlicz spaces constitute an important generalization of $L^p$-spaces. To this end, we shall briefly review the basic theory of Orlicz spaces \cite{RR91,RR02}.

Set $\bR_+:=[0,+\infty)$.  Let $\varphi:\bR_+\to\bR_+$ be a strictly increasing and continuous function with $\varphi(0)=0$ and $\lim_{t\to +\infty}\varphi(t)=+\infty$.
Besides $\varphi$, the following three functions are central to our discussions:
$$
\psi(t):=\varphi^{-1}(t),\ \ \Phi(t):=\int_0^t \varphi(s)ds,\ \ \Psi(t):=\int_0^t \psi(s)ds,\ \ t\in\bR_+.
$$
We call $\Phi$ and $\Psi$ a pair of conjugated {\it nice Young functions}. Then
$$xy\le \Phi(x)+\Psi(y),\ \ x,y\in\bR_+,$$ where equality holds if $x=\psi(y)$, or equivalently $y=\varphi(x)$. The nice Young function $\Psi$ complementary to $\Phi$ can equally be defined by $\Psi(t)=\sup\{x|t|-\Phi(x):x\in\bR_+\}$, $t\in\bR$.

Let $(\Omega,\cF,\mu)$ be a measure space.  The Orlicz space \cite{RR02} $\cL^\Phi$  consists of all the $\cF$-measurable functions $f:\Omega\to\bC$ such that
$$
\int_\Omega\Phi(\alpha|f|)d\mu<+\infty\mbox{ for some }\alpha>0.
$$
It is a linear vector space with two equivalent norms $\|\cdot\|_\Phi$, the Gauge norm (Luxemburg norm), and $|\cdot|_\Phi$, the Orlicz norm. They are respectively defined by
$$
\|f\|_\Phi:=\inf\Big\{\alpha>0:\int_\Omega \Phi\Big(\frac{|f|}\alpha\Big)d\mu\le \Phi(1)\Big\},\ \ f\in \cL^\Phi
$$
and
$$
|f|_\Phi:=\sup\Big\{\int_\Omega|fg|d\mu:g\in \cL^\Psi,\ \|g\|_\Psi\le1\Big\},\ \ f\in \cL^\Phi.
$$
They indeed are equivalent as $\Phi(1)\|f\|_\Phi\le |f|_\Phi\le 2 \|f\|_\Phi$, $f\in\cL^\Phi$. If $\varphi(t)=t^{p-1}$ and $\psi(t)=t^{q-1}$, $t\in\bR_+$, where $1<p,q<+\infty$ with $1/p+1/q=1$, then
$$
\Phi(t)=t^p/p,\ \Psi(t)=t^q/q,\ t\in\bR_+.
$$
In this case, $\cL^\Phi$ is the usual space $L^p_\mu(\Omega)$ and both the Gauge norm and Orlicz norm equal
$$
\|f\|_{p}:=\Big(\int_\Omega|f(x)|^pd\mu(x)\Big)^{1/p},\ \ f\in L^p_\mu(\Omega).
$$

If $\Phi$ and $\Psi$ are a pair of conjugated nice Young functions then $(\cL^\Psi,|\cdot|_\Psi)=(\cL^\Phi,\|\cdot\|_\Phi)^*$ (see, Theorem 13 in Section 1.2, \cite{RR02}) in the sense that for each continuous linear functional $T$ on
$(\cL^\Phi,\|\cdot\|_\Phi)$ there exists a unique $g\in \cL^\Psi$ such that
$$
T(f)=\int_\Omega fgd\mu,\ f\in \cL^\Phi\quad\mbox{and}\quad
\|T\|=|g|_\Psi.
$$
There is a unique semi-inner product (\cite{Lumer63}, pp. 101--104) on $(\cL^\Phi,\|\cdot\|_\Phi)$ given by
$$
[f,g]=\|g\|_\Phi^2\frac{\displaystyle{\int_\Omega f\sgn(\bar{g})\varphi\Big(\frac{|g|}{\|g\|_\Phi}\Big)d\mu}}{\displaystyle{\int_\Omega |g|\varphi\Big(\frac{|g|}{\|g\|_\Phi}\Big)d\mu}},\ \ f,g\in \cL^\Phi,
$$
where $\sgn(t):=t/|t|$ if $t\in\bC\setminus\{0\}$ and $\sgn(0):=0$. It implies that the standard duality mapping $J_\Phi:(\cL^\Phi,\|\cdot\|_\Phi)\to (\cL^\Psi,|\cdot|_\Psi)$ has the form
$$
J_\Phi(f)=\frac{\displaystyle{\|f\|_\Phi^2\sgn(\bar{f})\varphi\Big(\frac{|f|}{\|f\|_\Phi}\Big)}}{\displaystyle{\int_\Omega |f|\varphi\Big(\frac{|f|}{\|f\|_\Phi}\Big)d\mu}},\ \ f\in \cL^\Phi.
$$
Particularly, when $\cL^\Phi=L^p_\mu(\Omega)$, we observe that
$$
J_p(f)=\frac{\sgn(\bar{f})|f|^{p-1}}{\|f\|_p^{p-2}},\ \ f\in L^p_\mu(\Omega),
$$
which is the usual duality mapping from $L^p_\mu(\Omega)$ to $L^q_\mu(\Omega)$ first discovered by Giles \cite{Giles67}.

Another interesting example (\cite{RR02}, page 9) is when $\varphi(t)=\log(1+t)$ and $\psi(t)=e^{t}-1$, $t\in\bR_+$. In this case,
\begin{equation}\label{xlogx}
\Phi(t)=(1+t)\log(1+t)-t,\ \ \Psi(t)=e^{t}-t-1,\ \ t\in\bR_+.
\end{equation}

To emphasize the importance of Orlicz spaces in applications, we give an illustrating example that shows  {\it the Orlicz space with the nice Young functions (\ref{xlogx}) can approximate the $\ell^2$ or $\ell^1$ space}.  Consider the following sequence of nice Young functions
$$
\Phi_k(t):=\Phi(kt)=(1+kt)\log(1+kt)-kt,\ \ t\ge0,\ k>0.
$$

Let $\Omega:=\{1,2,\dots,n\}$ and $\mu$ be the counting measure on $\Omega$. Then for each $y:=(y_1,y_2,\dots,y_n)\in\bC^n$, $\|y\|_{\Phi_k}$ equals the smallest nonnegative number $\alpha$ such that
$$
\sum_{j=1}^n \Big(1+k\frac{|y_j|}{\alpha}\Big)\log\Big(1+k\frac{|y_j|}{\alpha}\Big)-\frac{k}{\alpha}\sum_{j=1}^n |y_j| \le (1+k)\log(1+k)-k.
$$
Hence, the semi-inner-product for the corresponding Orlicz space $\cL^{\Phi_k}$ has the form
$$
[x,y]=\alpha^2\frac{\sum_{j=1}^nx_j\sgn(\overline{y_j})\log\Big(1+k\frac{|y_j|}\alpha\Big)}{\sum_{j=1}^n|y_j|\log\left(1+k\frac{|y_j|}\alpha\right)},\ \ x,y\in\bC^n.
$$
Thus, on the unit sphere where $\alpha=1$, we see that
$$
\lim_{k\to0}[x,y]=\lim_{k\to 0}\frac{\sum_{j=1}^nx_j\sgn(\overline{y_j})\log(1+k|y_j|)^{1/k}}{\sum_{j=1}^n|y_j|\log(1+k|y_j|)^{1/k}}=\frac{\sum_{j=1}^nx_j\sgn(\overline{y_j})|y_j|}{\sum_{j=1}^n|y_j|^2}=\frac{\sum_{j=1}^nx_j\overline{y_j}}{\sum_{j=1}^n|y_j|^2},
$$
which is the inner product on the $\ell^2$ space, and by the L'H\^{o}pital's rule that
$$
\lim_{k\to+\infty}[x,y]=\lim_{k\to+\infty}\frac{\sum_{j=1}^nx_j\sgn(\overline{y_j})\frac{|y_j|}{1+k|y_j|}}{\sum_{j=1}^n|y_j|\frac{|y_j|}{1+k|y_j|}}=\frac{\sum_{j=1}x_j\sgn(\overline{y_j})}{\sum_{j=1}^n|y_j|},\ \ x,y\in\bC^n,
$$
which is the semi-inner product on the $\ell^1$ space.

We describe our construction of Orlicz RKBSs as follow.

\begin{theorem} Let $\Phi,\Psi$ be a pair of nice Young functions. Choose
$$
\cW_1:=(\cL^{\Phi},\|\cdot\|_{\Phi})\mbox{ and } \cW_2:=(\cL^{\Psi}, |\cdot|_{\Psi})
$$
as a pair of feature spaces. Suppose that there exist feature maps $\Phi_1:\Omega_1\to\cW_1,\Phi_2:\Omega_2\to\cW_2$ satisfying the denseness condition (\ref{density*}). Then  $\cB_1$ defined by (\ref{B1})  is an RKBS on $\Omega_1$ with the adjoint RKBS $\cB_2$ defined by (\ref{B2})  on $\Omega_2$ endowed with the bilinear form (\ref{continuousform1}). Moreover, a reproducing kernel for $\cB_1$ is
$$
K(x,y):=\langle \Phi_1(x),\Phi_2(y)\rangle_{\cL^{\Phi}}, \ x\in\Omega_1,y\in\Omega_2.
$$
\end{theorem}

\section{Existing examples of RKBSs}

In this section, we aim at justifying that our construction in the previous section covers existing RKBSs in the literature, including the reflexive RKBS \cite{ZXZ09}, the semi-inner-product RKBS \cite{ZXZ09,ZZ11}, the RKBS constructed by Borel measures \cite{SZ11}, the RKBS with the $\ell^1$ norm \cite{SZH13}, the RKBS with positive definite functions \cite{FHY15},  and the $p$-norm RKBS \cite{XY}. In particular, $C([0,1])$ is included in our framework. Hence, we are able to write out three  reproducing kernels for $C([0,1])$.

We shall primarily use Theorem \ref{Theorem1}  to fulfill this task.  Let us keep in mind that a pair of RKBSs contain the following eleven ingredients:
$$
\Omega_1,\Omega_2,\  \cW_1,\cW_2,\ \Phi_1,\Phi_2,\ \langle\cdot,\cdot\rangle_{\cW_1\times\cW_2},\ \cB_1,\cB_2,\langle\cdot,\cdot\rangle_{\cB_1\times\cB_2}, K.
$$

\subsection{Reproducing kernel Hilbert spaces}\label{subsection31}

Our construction must include RKHSs. To show this, we shall briefly review basic theory of RKHSs \cite{Aronszajn50, Buhmann03, CZ07, Hastie09, SS02, SC08, Wendland05,ZZhao13}.

An RKHS $\cH$ on a nonempty set $X$ is a Hilbert space of certain functions on $X$ such that point evaluation functionals are continuous on $\cH$. By the Riesz representation theorem, there exists a unique reproducing kernel $K$ on $X\times X$ such that
$$
K(\cdot,x)\in\cH \mbox{ for all }x\in X, \overline{\span}\{K(\cdot,x):x\in X\}=\cH
$$
and
\begin{equation}\label{representation}
f(x)=(f,K(\cdot,x))_{\cH}\mbox{ for all }f\in\cH\mbox{ and all }x\in X
\end{equation}
where $(\cdot,\cdot)_{\cH}$ denotes the inner product on $\cH$.
Equation (\ref{representation}) is called the reproducing property in machine learning.
A function $K$ on $X\times X$ is a reproducing kernel of an RKHS if and only if there exists  a feature map $\Phi$ from $X$ to a Hilbert space $\cW$ such that
$$
K(x,y)=( \Phi(x),\Phi(y))_{\cW},\ x,y\in X.
$$
Another well-known characterization of a reproducing kernel is for it to be positive semi-definite.
There is a bijective correspondence between RKHSs and reproducing kernels. For this sake, the RKHS corresponding to a reproducing kernel $K$ is denoted by $\cH_K$.
The feature map $\Phi$ and feature space $\cW$ of a reproducing kernel may not be unique.
In particular, we say $\Phi$ is the canonical feature map of $K$ if $\Phi(x)=K(\cdot,x)$. In this case, $\cW=\cH_K$ and
$$
K(x,y)=(K(\cdot,x),K(\cdot,y))_{\cH_K}, \ x,y\in X.
$$

A reproducing kernel can be easily identified through its feature map. The following result is well-known in machine learning community \cite{XZ07,ZZhao13}.

\begin{lemma}\label{lemma31}\cite{XZ07} If $K:X\times X\to\bC$ is a reproducing kernel with a feature map $\Phi$ from $X$ to a Hilbert space $\cW$, then
$\cH_K=\{(\Phi(\cdot),u)_{\cW}:u\in\cW\}$ with the inner product
$$
\big((\Phi(\cdot),u)_{\cW}, (\Phi(\cdot),v)_{\cW}\big)_{\cH_K}=(\bP_{\Phi}v,\bP_{\Phi}u)_{\cW}, \ u,v\in\cW,
$$
where $\bP_{\Phi}$ denotes the orthogonal projection from $\cW$ onto $\overline{\span}\Phi(X)$.
If $\cW=\overline{\span}\Phi(X)$, then $\cH_K$ is isometrically isomorphic to $\cW$ through the linear mapping $(\Phi(\cdot),u)_{\cW}\to u$.
\end{lemma}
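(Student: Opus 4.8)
The plan is to verify directly that the candidate space $\cH:=\{f_u:=(\Phi(\cdot),u)_{\cW}:u\in\cW\}$, equipped with the proposed inner product, is a Hilbert space of functions on $X$ whose reproducing kernel is exactly $K$, and then to invoke the uniqueness of the reproducing kernel (the bijective correspondence between RKHSs and kernels recalled above) to conclude $\cH=\cH_K$. Throughout I write $\cW_0:=\overline{\span}\Phi(X)$, so that $\bP_\Phi$ is the orthogonal projection of $\cW$ onto $\cW_0$ and $\cW=\cW_0\oplus\cW_0^\perp$.

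The first step is to handle the fact that the assignment $u\mapsto f_u$ need not be injective. Since $\Phi(x)\in\cW_0$ for every $x\in X$, one has $(\Phi(x),u)_{\cW}=(\Phi(x),\bP_\Phi u)_{\cW}$, hence $f_u=f_{\bP_\Phi u}$; moreover $f_u\equiv 0$ iff $u\perp\span\Phi(X)$, i.e. iff $\bP_\Phi u=0$. Consequently $f_u=f_v$ iff $\bP_\Phi u=\bP_\Phi v$, which shows that the right-hand side $(\bP_\Phi v,\bP_\Phi u)_{\cW}$ depends only on the functions $f_u,f_v$ and not on the chosen representatives; this is exactly what makes the inner product well-defined. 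I would then check that it is a genuine inner product: sesquilinearity and Hermitian symmetry are inherited from $(\cdot,\cdot)_{\cW}$, while positive-definiteness follows from $\|f_u\|_{\cH_K}^2=(\bP_\Phi u,\bP_\Phi u)_{\cW}=\|\bP_\Phi u\|_{\cW}^2$, which vanishes precisely when $f_u\equiv 0$.

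Next I would establish completeness and identify the kernel. The map $f_u\mapsto\bP_\Phi u$ is a norm-preserving bijection from $\cH$ onto the closed (hence complete) subspace $\cW_0$ of $\cW$, so $\cH$ inherits completeness and is a Hilbert space of functions on $X$. Since $K(\cdot,y)=(\Phi(\cdot),\Phi(y))_{\cW}=f_{\Phi(y)}$, we have $K(\cdot,y)\in\cH$ for every $y\in X$; and for any $f_u\in\cH$, using $\bP_\Phi\Phi(y)=\Phi(y)$ together with $\Phi(y)\perp\cW_0^\perp$ gives
$$
(f_u,K(\cdot,y))_{\cH_K}=(\bP_\Phi\Phi(y),\bP_\Phi u)_{\cW}=(\Phi(y),u)_{\cW}=f_u(y),
$$
which is the reproducing property. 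Because $\span\{K(\cdot,y):y\in X\}$ corresponds under $f_u\mapsto\bP_\Phi u$ to $\span\Phi(X)$, whose closure is $\cW_0$, we also get $\overline{\span}\{K(\cdot,y):y\in X\}=\cH$. Thus $\cH$ is an RKHS with reproducing kernel $K$, and by uniqueness of the RKHS associated with a kernel, $\cH=\cH_K$.

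Finally, when $\cW=\cW_0=\overline{\span}\Phi(X)$ we have $\bP_\Phi=I$, so the inner product reduces to $(f_u,f_v)_{\cH_K}=(v,u)_{\cW}$ and $\|f_u\|_{\cH_K}=\|u\|_{\cW}$, whence $f_u\mapsto u$ is a norm-preserving bijection, establishing the claimed isometric isomorphism. The point demanding the most care throughout is the bookkeeping with $\bP_\Phi$: the reversed order of the arguments in the inner-product formula is precisely what the sesquilinear convention forces once one observes that $u\mapsto f_u$ is conjugate-linear, and the main conceptual obstacle is recognizing that well-definedness of the inner product hinges on quotienting out $\cW_0^\perp=\ker\bP_\Phi$.
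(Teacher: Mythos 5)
Your proof is correct. Note that the paper itself gives no proof of this lemma --- it is quoted from \cite{XZ07} as a known result --- so there is no in-paper argument to compare against; your reasoning (quotienting out $\ker\bP_\Phi$ to make the inner product well-defined, transporting completeness from the closed subspace $\overline{\span}\,\Phi(X)$, verifying the reproducing property, and invoking uniqueness of the RKHS of a given kernel) is exactly the standard argument from that reference, and your remark that $u\mapsto(\Phi(\cdot),u)_{\cW}$ is conjugate-linear correctly accounts for the reversed arguments in the stated inner-product formula.
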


Note that the dual space of $\cH_K$ is  $\cH_K^*=\overline{\cH_K}:=\{\bar{f}:f\in\cH_K\}$. By Theorem \ref{Theorem1} and Lemma \ref{lemma31}, we obtain the following result.
\begin{example} Let $\cH_K$ be an RKHS on $X$ with the reproducing kernel $K$, and let $\Phi$ from $X$ to a Hilbert space $\cW$ be a feature map of $K$ such that $\cW=\overline{\span}\Phi(X)$. Choose
$$
\Omega_1=\Omega_2:=X,\ \cW_1:=\cW,\cW_2:=\overline{\cW},
$$
$$
\Phi_1: \Omega_1\to \cW_1, \quad \Phi_1(x):=\Phi(x), x\in \Omega_1,
$$
$$
\Phi_2:\Omega_2\to\cW_2,\quad \Phi_2(y):=\overline{\Phi(y)}, y\in \Omega_2
$$
in Theorem \ref{Theorem1}. Then
$$
\cB_1:=\Big\{f_v(x):=\langle \Phi_1(x),v\rangle_{\cW_1}=\langle \Phi(x),v\rangle_{\cW}=(\Phi(x),\bar{v})_{\cW}: v\in\cW_2, x\in\Omega_1\Big\}=\cH_K
$$
with norm $\|f_v\|_{\cB_1}:=\|\bar{v}\|_{\cW}$ is an RKBS on $\Omega_1$. Its adjoint RKBS is
$$
\cB_2:=\Big\{g_u(y):=\langle u,\Phi_2(y)\rangle_{\cW_1}=\langle u,\overline{\Phi(y)}\rangle_{\cW}=(u,\Phi(y))_{\cW}=\overline{(\Phi(y),u)_{\cW}}: u\in\cW_1, y\in\Omega_2\Big\}=\overline{\cH_K}
$$
with norm $\|g_u\|_{\cB_2}:=\|u\|_{\cW}$. The bilinear form on $\cB_1\times\cB_2$ is given by
$$
\langle f_v,g_u\rangle_{\cB_1\times\cB_2}:=\langle (\Phi(x),\bar{v})_{\cW}, \overline{(\Phi(y),u)_{\cW}}\rangle_{\cH_K}
=\big((\Phi(x),\bar{v})_{\cW}, (\Phi(y),u)_{\cW}\big)_{\cH_K} =(u,\bar{v})_{\cW}.
$$
Moreover,  $$\langle \Phi_1(x),\Phi_2(y)\rangle_{\cW_1}=\langle \Phi(x),\overline{\Phi(y)}\rangle_{\cW}=(\Phi(x),\Phi(y))_{\cW}=K(x,y)$$ is a reproducing kernel for $\cB_1$.
\end{example}
\begin{proof} The denseness condition (\ref{density}) is satisfied. By Theorem \ref{Theorem1}, the proof is complete.
\end{proof}

Therefore, we have showed that an RKHS is an RKBS with the same reproducing kernel.

\subsection{Reflexive RKBSs}

Learning in Banach spaces has received considerable attention in the past two decades. The notion of reproducing kernel Banach spaces was introduced in machine learning in 2009, \cite{ZXZ09}.  Reflexive RKBSs constructed in \cite{ZXZ09} are the first class of RKBSs with reproducing kernels.

A normed vector space $V$ is {\it reflexive} if $(V^*)^*=V$, that is,
every continuous linear functional $T$ on $V^*$ must be of the form
$$T(v^*)=v^*(u), \ \ v^*\in V^*$$ for some $u\in V$. A Banach space is reflexive if and only if its dual is reflexive, \cite{Megginson98}.

The definition and construction of reflexive RKBSs in \cite{ZXZ09} are described below.

$\\$
{\bf Definition of Reflexive RKBSs in  \cite{ZXZ09}.}  {\it An RKBS on $X$ is a reflexive Banach space $\cB$ of functions on $X$ for which $\cB^*$ is isometrically isomorphic to a Banach space $\cB^{\#}$ of functions on $X$ and the point evaluation is continuous on both $\cB$ and $\cB^{\#}$.}

$\\$
{\bf Construction of Reflexive RKBSs in  \cite{ZXZ09}.} {\it Let $\cW$ be a reflexive Banach space. Suppose that there exists $\Phi:X\to\cW$, and $\Phi^*: X\to\cW^*$ such that $\overline{\span}\Phi(X)=\cW$, $\overline{\span}\Phi^*(X)=\cW^*$. Then
$$
\cB:=\Big\{\langle u,\Phi^*(\cdot)\rangle_{\cW}: u\in\cW\Big\}\mbox{ with norm }\|\langle u,\Phi^*(\cdot)\rangle_{\cW}\|_{\cB}:=\|u\|_{\cW}
$$
is an RKBS on $X$ with the dual space
$$
\cB^{\#}:=\Big\{\langle \Phi(\cdot),u^*\rangle_{\cW}: u^*\in\cW^*\Big\} \mbox{ endowed with norm }\|\langle \Phi(\cdot),u^*\rangle_{\cW}\|_{\cB^{\#}}:=\|u^*\|_{\cW^*}
$$
and the bilinear form
$$
\big\langle \langle u,\Phi^*(\cdot)\rangle_{\cW},\langle \Phi(\cdot),u^*\rangle_{\cW}\big\rangle_{\cB\times\cB^{\#}}:=\langle u,u^*\rangle_{\cW},\ u\in \cW, u^*\in\cW^*.
$$
Moreover, $K(x,y):=\langle \Phi(x),\Phi^*(y)\rangle_{\cW}$ is a reproducing kernel for $\cB$.}
$\\$

We show that the above construction of reflexive RKBSs falls within our framework.

\begin{example} Assume the same conditions as in the above construction of reflexive RKBSs. Choose
$$
\Omega_1=\Omega_2:=X,\  \cW_1:=\cW^*,\ \cW_2:=\cW,
$$
$$
\Phi_1:\Omega_1\to\cW_1,\quad \Phi_1(x):=\Phi^*(x), \ x\in \Omega_1,
$$
$$
\Phi_2:\Omega_2\to\cW_2,\quad \Phi_2(y):=\Phi(y),\ y\in \Omega_2\ \
$$
in Theorem \ref{Theorem1}. Then
$$
\cB_1:=\Big\{f_v(x):=\langle \Phi_1(x),v\rangle_{\cW_1}=\langle \Phi^*(x),v\rangle_{\cW^*}=\langle v,\Phi^*(x)\rangle_{\cW}: v\in\cW_2=\cW, x\in\Omega_1\Big\} =\cB
$$
with norm $\|f_v\|_{\cB_1}:=\|v\|_{\cW}$ is an RKBS on $\Omega_1$. Its adjoint RKBS is
$$
\cB_2:=\Big\{g_u(y):=\langle u,\Phi_2(y)\rangle_{\cW_1}=\langle u,\Phi(y)\rangle_{\cW^*}=\langle \Phi(y),u\rangle_{\cW}: u\in\cW_1=\cW^*, y\in\Omega_2\Big\}=\cB^{\#}
$$
with norm $\|g_u\|_{\cB_2}:=\|u\|_{\cW^*}$. The bilinear form on $\cB_1\times\cB_2$ takes the form
$$
\langle f_v,g_u\rangle_{\cB_1\times\cB_2}:=\langle u,v\rangle_{\cW_1}=\langle v,u\rangle_{\cW}.
$$
Moreover, a reproducing kernel of $\cB_1$ is
$$
\langle \Phi_1(x),\Phi_2(y)\rangle_{\cW_1}=\langle \Phi^*(x),\Phi(y)\rangle_{\cW^*}=\langle \Phi(y),\Phi^*(x)\rangle_{\cW}\mbox{ for all }x\in\Omega_1,y\in\Omega_2.
$$
\end{example}
\begin{proof}  Note that $\cW$ is a reflexive Banach space. The denseness condition (\ref{density}) is hence satisfied. By Theorem \ref{Theorem1}, the proof is complete.
\end{proof}

\subsection{Semi-inner-product RKBSs}

As mentioned in \cite{ZXZ09}, the lack of an inner product may cause arbitrariness in the properties of the associated reproducing kernel of an RKBS. To overcome this,  the authors also proposed the second class of RKBSs, namely, semi-inner-product RKBSs. Thanks to the tool of semi-inner-products, existence, uniqueness and representer theorems for the standard learning schemes in semi-inner-product RKBSs were established therein.

For a  uniformly convex and uniformly Fr\'{e}chet differentiable Banach space $\cB$, there exists a unique {\it semi-inner product} $[\cdot,\cdot]_{\cB}:\cB\times\cB\to\bC$ such that for all $f,g,h\in\cB$ and $\alpha\in\bC$,
\begin{description}
\item[(i)] $[f+g,h]_{\cB}=[f,h]_{\cB}+[g,h]_{\cB}$, \ $[\alpha f,g]_{\cB}=\alpha[f,g]_{\cB}$,
\item[(ii)]  $[f,f]_{\cB}>0$ for $f\ne0$,
\item[(iii)] (The Cauchy-Schwartz inequality) $|[f,g]_{\cB}|^2\le [f,f]_{\cB}[g,g]_{\cB}$.
\end{description}
We refer to Section 2 in \cite{ZZ12} or Section 5.5 in \cite{Megginson98}, and \cite{DL07, Giles67, Lumer61, ZXZ09} for more details on semi-inner products. By the Cauchy-Schwartz inequality, for each $g\in\cB$, $f\to[f,g]_{\cB}$ is a bounded linear functional on $\cB$, which is denoted by $g^*\in\cB^*$ and called the dual element of $g$. Following this definition, we have
$$
[f,g]_{\cB}=\langle f, g^*\rangle_{\cB}.
$$
Giles \cite{Giles67} proved that if $\cB$ is a uniformly convex and uniformly Fr\'{e}chet differentiable Banach space $\cB$, then the duality mapping $f\to f^*$ is bijective from $\cB$ to $\cB^*$. Moreover,
$$
[f^*,g^*]_{\cB^*}=[g,f]_{\cB}\mbox{ for all }f,g\in\cB
$$
is a semi-inner product on $\cB^*$.

The following definition of the semi-inner-product RKBS (s.i.p. RKBS) comes from \cite{ZXZ09}.

$\\$
{\bf Definition of s.i.p. RKBS in \cite{ZXZ09}.} {\it We call a uniformly convex and uniformly Fr\'{e}chet differentiable Banach space of functions on X an s.i.p. RKBS.}

Note that a uniformly convex Banach space is reflexive. It follows that an s.i.p. RKBS is also a reflexive RKBS.
We are able to construct the s.i.p. RKBS in our framework.
\begin{example}\label{Examplereflexive} Let $\cW$ be a uniformly convex and uniformly Fr\'{e}chet differentiable Banach space, and $\Phi:X\to\cW$ and $\Phi^*:X\to\cW^*$ be such that $\overline{\span}\Phi(X)=\cW$ and $\overline{\span}\Phi^*(X)=\cW^*$. Choose
$$
\Omega_1=\Omega_2:=X,\  \cW_1:=\cW^*,\ \cW_2:=\cW,
$$
$$
\Phi_1:\Omega_1\to\cW_1,\quad \Phi_1(x):=\Phi^*(x),\  x\in \Omega_1,
$$
$$
\Phi_2:\Omega_2\to\cW_2,\quad \Phi_2(y):=\Phi(y),\  y\in \Omega_2\
$$
in Theorem \ref{Theorem1}. Then
$$
\cB_1:=\Big\{f_v(x):=\langle \Phi_1(x),v\rangle_{\cW_1}=\langle \Phi^*(x),v\rangle_{\cW^*}=[\Phi^*(x),v^*]_{\cW^*}=[v,\Phi(x)]_{\cW}: v\in\cW_2, x\in\Omega_1\Big\}
$$
with norm $\|f_v\|_{\cB_1}:=\|v\|_{\cW}$ is an RKBS on $\Omega_1$. Its adjoint RKBS is
$$
\cB_2:=\Big\{g_u(y):=\langle u,\Phi_2(y)\rangle_{\cW_1}=\langle u,\Phi(y)\rangle_{\cW^*}=[u,\Phi(y)^*]_{\cW^*}=[\Phi(y),u^*]_{\cW}: u\in\cW_1, y\in\Omega_2\Big\}
$$
with norm $\|g_{u^*}\|_{\cB_2}:=\|u^*\|_{\cW^*}$.  The bilinear form on $\cB_1\times\cB_2$ is given by
$$
\langle f_v,g_u\rangle_{\cB_1\times\cB_2}:=\langle u,v\rangle_{\cW_1}=\langle v,u\rangle_{\cW}=[v,u^*]_{\cW}.
$$
Moreover, a reproducing kernel for $\cB_1$ is
$$
\langle \Phi_1(x),\Phi_2(y)\rangle_{\cW_1}=\langle \Phi^*(x),\Phi(y)\rangle_{\cW^*}= [\Phi^*(x),\Phi^*(y)]_{\cW^*}=[\Phi(y),\Phi(x)]_{\cW}
\mbox{ for all }x\in\Omega_1,y\in\Omega_2.
$$
\end{example}
\begin{proof}  By assumptions, $\cW$ is a reflexive Banach space. The denseness condition (\ref{density}) is hence satisfied. By Theorem \ref{Theorem1}, the proof is complete.
\end{proof}

Here, we shall point out in Example \ref{Examplereflexive} that
$$
\cB_2=\Big\{[\Phi(y),u^*]_{\cW}: u\in\cW_1=\cW^*, y\in\Omega_2\Big\}=\Big\{[\Phi(y),u]_{\cW}: u\in \cW, y\in X\Big\}.
$$
We remark that $\cB_1$ and $\cB_2$ in Example \ref{Examplereflexive} are exactly the $\cB$ and $\cB^*$ constructed in (Theorem 10, \cite{ZXZ09}). They also have the same reproducing kernel.

\subsection{RKBSs by Borel measures}

In order to improve the learning rate estimate of the $\ell_1$-regularized least square regression,
a class of RKBSs by Borel measures was constructed in \cite{SZ11} to have the $\ell_1$ norm and satisfy the linear representer theorem.

Suppose that  $X$ is a locally compact Hausdorff space and denote by $C_0(X)$ the Banach space of continuous functions $f:X\to\bC$ such that for all $\varepsilon>0$, the set $\{x\in X: |f(x)|\ge\varepsilon\}$ is compact. Its dual space is isometrically isomorphic to the space $\cM(X)$ of all the regular complex-valued Borel measures on $X$ (see, Theorem 6.19 in \cite{Rudin87}). The norm of each measure $v\in\cM(X)$ is its total variation $\|v\|_{TV}$.

We are able to construct the RKBSs  by Borel measures in our framework.
\begin{example} Let $X$ be a locally compact Hausdorff space, and let $K: X\times X\to\bC$  be a continuous function such that $\overline{\span}\{K(\cdot,x):x\in X\}=C_0(X)$.
Choose
$$
\Omega_1=\Omega_2:=X,\ \cW_1:=C_0(X),\ \cW_2=\cM(X),
$$
$$
\Phi_1:\Omega_1\to\cW_1,\quad \Phi_1(x):=K(\cdot,x), \ x\in \Omega_1,
$$
$$
\Phi_2:\Omega_2\to\cW_2,\quad \Phi_2(y)=\delta_y,\  y\in \Omega_2
$$
in Theorem \ref{Theorem1}.  Then
$$
\cB_1:=\Big\{f_v(x):=\langle \Phi_1(x),v\rangle_{\cW_1}=\int_{X} K(t,x)dv(t): v\in\cW_2,x\in\Omega_1\Big\}
$$
with norm $\|f_v\|_{\cB_1}:=\|v\|_{TV}$  is an RKBS on $\Omega_1$. Its adjoint RKBS is
$$
\cB_2:=\Big\{g_u(y):=\langle u,\Phi_2(y)\rangle_{\cW_1}=u(y): u\in \cW_1,y\in\Omega_2\Big\}=C_0(X)
$$
with norm $\|g_u\|_{\cB_2}:=\sup_{y\in \Omega_2}|u(y)|$. The bilinear form on $\cB_1\times\cB_2$ is given by
$$
\langle f_v,g_u\rangle_{\cB_1}:=\langle u,v\rangle_{\cW_1}=\int_{X}u(t)dv(t).
$$
Moreover, $K$ is a reproducing kernel for $\cB_1$.
\end{example}
\begin{proof} As $\Phi_2(X)=\{\delta_y:y\in X\}$ is dense in $\cM(X)=(C_0(X))^*$ under the weak* topology, the denseness condition (\ref{densityweak}) holds true. By Theorem \ref{Theorem1}, the proof is complete.
\end{proof}

\subsection{RKBSs with the $\ell^1$ norm}\label{Subsection3.5}

The RKBS with the $\ell^1$ norm was developed in \cite{SZH13} for sparse learning.  The construction starts directly with a kernel function satisfying the following two requirements:
\begin{description}
\item[(i)] $K$ is bounded,
\item[(ii)] for all pairwise distinct sampling points $x_j\in X$, $j\in\bN$ and $c=(c_j:j\in\bN)\in\ell^1(\bN)$, $\sum_{j\in\bN}c_jK(x_j,x)=0$ for all $x\in X$ implies $c=0$.
\end{description}

Denote for any nonempty set $X$ by
$$
\ell^1(X):=\Big\{u:=(u_x\in\bC:x\in X): \|u\|_{\ell^1(X)}:=\sum_{x\in X}|u_x|<+\infty \Big\}
$$
the Banach space of functions on $X$ that is integrable with respect to the counting measure on $X$. Note that $X$ might be uncountable but for any $u\in \ell^1(X)$, $\supp u:=\{x\in X: u_x\ne0\}$ must be at most countable.  Note that $\ell^1(X)$ can be imbedded into $\cM(X)$.

$\\$
{\bf Construction of RKBSs with the $\ell^1$ Norm in \cite{SZH13}.} {\it Let $K:X\times X\to\bC$ be a kernel satisfying aforementioned  two requirements (i) and (ii). Then
$$\cB:=\Big\{f_u:=\sum_{x\in\supp u} u_xK(x,\cdot): u\in\ell^1(X)\Big\}$$
with norm $\|f_u\|_{\cB}:=\|u\|_{\ell^1}$, and $\cB^{\#}$, the completion of the vector space of functions $\sum_{j=1}^n v_j K(\cdot,y_j)$, $y_j\in X$ under the supremum norm
$$
\Big\|\sum_{j=1}^n v_j K(\cdot,y_j)\Big\|_{\cB^{\#}}:=\sup_{x\in X}\Big|\sum_{j=1}^n v_j K(x, y_j)\Big|,
$$
are both Banach space of functions on $X$ where point evaluations are continuous linear functionals. In addition, the bilinear form
$$
\Big\langle \sum_{j=1}^n  u_jK(x_j,\cdot),\sum_{k=1}^m v_k K(\cdot,y_k)\Big\rangle_{\cB\times\cB^{\#}}:=\sum_{j=1}^n\sum_{k=1}^m u_jv_k K(x_j,y_k),\ x_j,y_k\in X,
$$
can be extended to $\cB\times\cB^{\#}$ such that
$$
|\langle f,g\rangle_{\cB\times\cB^{\#}}|\le \|f\|_{\cB}\|g\|_{\cB^{\#}}\mbox{ for all }f\in\cB,\ g\in\cB^{\#}
$$
and
$$
\langle f,K(\cdot,y)\rangle_{\cB\times\cB^{\#}}=f(y),\  \langle K(x,\cdot),g\rangle_{\cB\times\cB^{\#}}=g(x)\mbox{ for all }x,y\in X, f\in\cB,\ g\in\cB^{\#}.
$$
}

We show below that RKBSs with the $\ell^1$ norm fall into our framework.

\begin{example} Let $X$ be a locally compact Hausdorff space, and let $K:X\times X\to\bC$ be bounded and continuous such that $C_0(X)=\overline{\span}\{K(\cdot,x):x\in X\}$.  Choose
$$
\Omega_1=\Omega_2:=X,\ \cW_1:=C_0(X),\ \cW_2:=\ell^1(X),
$$
$$
\Phi_1:\Omega_1\to\cW_1,\quad \Phi_1(x):=K(\cdot,x), \ \ x\in \Omega_1,
$$
$$
\Phi_2:\Omega_2\to\cW_2,\quad \Phi_2(y)=\delta_y, \ \ y\in \Omega_2
$$
in Theorem \ref{Theorem1}.  Then
$$
\cB_1:=\Big\{f_v(x):=\langle \Phi_1(x),v\rangle_{\cW_1}=\sum_{t\in\supp v} v_t K(t,x): v\in\cW_2=\ell^1(X),x\in\Omega_1\Big\}
$$
with norm $\|f_v\|_{\cB_1}:=\|v\|_{\ell^1(X)}$  is an RKBS on $\Omega_1$. Its adjoint RKBS is
$$
\cB_2:=\Big\{g_u(y):=\langle u,\Phi_2(y)\rangle_{\cW_1}=u(y): u\in \cW_1,y\in\Omega_2\Big\}=C_0(X)
$$
with norm $\|g_u\|_{\cB_2}:=\sup_{y\in X}|u(y)|$. The bilinear form on $\cB_1\times\cB_2$ is given by
$$
\langle f_v,g_u\rangle_{\cB_1\times\cB_2}:=\langle u,v\rangle_{\cW_1}:=\sum_{t\in\supp v}v_tu(t).
$$
Moreover, $K$ is a reproducing kernel for $\cB_1$.
\end{example}
\begin{proof}  The boundedness of $K$ guarantees that $f_v$ in $\cB_1$ is well-defined. The set $\Phi_2(X)=\{\delta_y:y\in X\}$ is dense in $\cM(X)=(C_0(X))^*$ under the weak* topology. The denseness condition (\ref{densityweak}) is hence satisfied. By Theorem \ref{Theorem1}, the proof is complete.
\end{proof}

In the rest of this subsection, we discuss the particular and interesting space $C([0,1])$. We shall show that $C([0,1])$ is an RKBS in our framework and shall present several explicit reproducing kernels for the space. Non-uniqueness of the reproducing kernel is caused by existence of many feature maps $\Phi_1:[0,1]\to C([0,1])$ satisfying the denseness condition $\overline{\span}{\Phi_1(X)}=C([0,1])$.

\begin{example}\label{example35} Choose
$$
\Omega_1=\Omega_2:=[0,1],\  \cW_1:=C([0,1]), \ \cW_2:=\ell^1([0,1]),
$$
$$
\Phi_1:\Omega_1\to \cW_1, \quad \Phi_1(x)(t)=1-|t-x|, \ x\in\Omega_1,\ t\in[0,1],
$$
$$
\Phi_2:\Omega_2\to\cW_2, \quad \Phi_2(y)=\delta_y, \ y\in\Omega_2
$$
in Theorem \ref{Theorem1}.  Then
$$
\cB_1:=\Big\{f_v(x):=\langle \Phi_1(x),v\rangle_{\cW_1}=\sum_{t\in\supp v} v_t(1-|t-x|): v\in\cW_2,x\in\Omega_1\Big\}
$$
with norm $\|f_v\|_{\cB_1}:=\|v\|_{TV}$  is an RKBS on $\Omega_1$. Its adjoint RKBS is
$$
\cB_2:=\Big\{g_u(y):=\langle u,\Phi_2(y)\rangle_{\cW_1}=u(y): u\in \cW_1,y\in\Omega_2\Big\}=C([0,1])
$$
with norm $\|g_u\|_{\cB_2}:=\sup_{y\in [0,1]}|u(y)|$. The bilinear form on $\cB_1\times\cB_2$ is given by
$$
\langle f_v,g_u\rangle_{\cB_1\times\cB_2}:=\langle u,v\rangle_{\cW_1}:=\int_0^1 u(t)dv(t).
$$
Moreover,  $K(x,y):=\langle \Phi_1(x),\Phi_2(y)\rangle_{\cW_1}=1-|x-y|$, $x,y\in[0,1]$ is a reproducing kernel for $\cB_1$.
\end{example}
\begin{proof} Note that every continuous function on $[0,1]$ can be approximated uniformly by piecewise linear functions.
Thus, we have $\cW_1=\overline{\span}\Phi_1([0,1])=\overline{\span}\{1-|\cdot-x|: x\in [0,1]\}$.
The denseness condition (\ref{densityweak}) is hence satisfied. By Theorem \ref{Theorem1}, the proof is complete.
\end{proof}

We can also take $\Phi_1(x)(t):=(1+t)^x$ or $\Phi_1(x)(t):=e^{tx}$ in Example \ref{example35} and they all satisfy the denseness condition (\ref{density}). This is true by the fact in complex analysis that zeros of a nontrivial holomorphic function are isolated. Correspondingly,
$$
K(x,y)=(1+y)^x \mbox{ or }K(x,y)=e^{xy} \mbox{ for all }x,y\in[0,1]
$$
can be viewed as reproducing kernels for $\cB_1$ as well.  By Definition \ref{definition2}, $\widetilde{K}(x,y)=K(y,x)$ is a reproducing kernel of  $\cB_2=C([0,1])$. Thus, we obtain three reproducing kernels for $C([0,1])$:
$$
1-|x-y|,\  (1+x)^{y},\ e^{xy},\ x,y\in [0,1].
$$

\subsection{RKBSs with positive definite functions}

The relationship between generalized Sobolev spaces and RKHSs was established by developing a connection between Green functions and reproducing kernels.
Motivated by this, the authors in \cite{FHY15} used Fourier transform techniques to construct RKBSs with positive definite functions.
Furthermore, Ye in \cite{Ye2013} developed numerical algorithms for support vector machines in those RKBSs.

Let $\phi\in L^1(\bR^d)\cap C(\bR^d)$ be positive definite, that is, for all finite distinct points $x_1,x_2,\ldots,x_n\in\bR^d$, the matrix $[\phi(x_j-x_k):j,k=1,2,\dots, n]$ is strictly positive-definite. It has been known that $\phi$ is positive definite if and only if it is bounded, its Fourier transform $\hat{\phi}$ is nonnegative, and
$$
S_{\hat{\phi}}:=\big\{\xi\in\bR^d: \hat{\phi}(\xi)\ne0\big\}
$$ has positive Lebesgue measure (see, for instance, Section 6.2 in \cite{Wendland05}). In this paper, we use the following form of the Fourier transform
$$
\hat{f}(\xi)=\int_{\bR^d}f(x)e^{-i2\pi x\cdot \xi}dx,\ \ \xi\in\bR^d,\ f\in L^1(\bR^d)
$$
and the inverse Fourier transform
$$
\check{f}(\xi)=\int_{\bR^d}f(x)e^{i2\pi x\cdot \xi}dx,\ \ \xi\in\bR^d,\ f\in L^1(\bR^d).
$$

We show below that  RKBSs (Theorem 4.1 in  \cite{FHY15} or Theorem 1 in \cite{Ye2013}) with positive definite functions fall into our framework.
\begin{example}\label{example313} Let $1<q\le 2\le p<+\infty$ and $1/p+1/q=1$. Suppose that $\phi\in L^1(\bR^d)\cap C(\bR^d)$ is a positive definite, even function on $\bR^d$ and that $\hat{\phi}^{q-1}\in L^1(\bR^d)$.  Choose
$$
\Omega_1=\Omega_2:=\bR^d,
$$
$$
 \cW_1:=\Big\{u\in L^q(\bR^d)\cap C(\bR^d): S_{\check{u}}\subseteq S_{\hat{\phi}}, {\check{u}}/\hat{\phi}^{1/p}\in L^p(\bR^d)\Big\}\mbox{ with norm }\|u\|_{\cW_1}=\Big(\int_{\bR^d}\frac{|\check{u}(\xi)|^p}{\hat{\phi}(\xi)}d\xi\Big)^{1/p}
$$
$$
\cW_2:=\Big\{v\in L^p(\bR^d)\cap C(\bR^d): S_{\hat{v}}\subseteq S_{\hat{\phi}}, \hat{v}/\hat{\phi}^{1/q}\in L^q(\bR^d) \Big\}\mbox{ with norm }\|v\|_{\cW_2}=\Big(\int_{\bR^d}\frac{|\hat{v}(\xi)|^q}{\hat{\phi}(\xi)}d\xi\Big)^{1/q}
$$
$$
\Phi_1:\Omega_1\to \cW_1,\quad \Phi_1(x):=\phi(\cdot-x), \ x\in\Omega_1,
$$
$$
\Phi_2:\Omega_2\to\cW_2,\quad \Phi_2(y):=\phi(\cdot-y),\ y\in\Omega_2
$$
and define the following continuous bilinear form on $\cW_1\times\cW_2$
$$
\langle u,v\rangle_{\cW_1\times\cW_2}:=\int_{\bR^d}{\frac{\check{u}(\xi)\hat{v}(\xi)}{\hat{\phi}(\xi)}}d\xi,\ \ u\in\cW_1,v\in\cW_2
$$
in Theorem \ref{Theorem1}. Then
$$
\cB_1:=\Big\{f_v(x):=\langle \Phi_1(x),v\rangle_{\cW_1\times\cW_2}=\int_{\bR^d}\frac{\check{\phi}(\xi)e^{i2\pi x\cdot\xi}\hat{v}(\xi)}{\hat{\phi}(\xi)}d\xi=\int_{\bR^d}\frac{\hat{\phi}(\xi)e^{i2\pi x\cdot\xi}\hat{v}(\xi)}{\hat{\phi}(\xi)}d\xi=v(x):v\in\cW_2\Big\}
$$
with norm $\|f_v\|_{\cB_1}:=\|v\|_{\cW_2}$ is an RKBS on $\Omega_1$. Its adjoint RKBS is
$$
\cB_2:=\Big\{g_u(y):=\langle u,\Phi_2(y)\rangle_{\cW_1\times\cW_2}=\int_{\bR^d}\frac{\check{u}(\xi)\hat{\phi}(\xi)e^{-i2\pi y\cdot\xi}}{\hat{\phi}(\xi)}d\xi=u(y):u\in\cW_1\Big\}
$$
with norm $\|g_u\|_{\cB_2}:=\|u\|_{\cW_1}$. The bilinear form on $\cB_1\times\cB_2$ is defined by
\begin{equation}\label{eq36}
\langle f_v,g_u\rangle_{\cB_1\times\cB_2}:=\langle u,v\rangle_{\cW_1\times\cW_2}.
\end{equation}
Moreover, $\phi(x-y)$ is a reproducing kernel for $\cB_1$.
\end{example}
\begin{proof}  To begin with, we shall check the denseness condition (\ref{density*}). Since $\phi\in L^1(\bR^d)\cap C(\bR^d)$ is positive definite,  we have $\hat{\phi}\in L^1(\bR^d)\cap C(\bR^d)$ and $\hat{\phi}^{p-1}\in L^1(\bR^d)$ for $p\ge2$. Observe that $\check{\phi}=\hat{\phi}$ as $\phi$ is an even function on $\bR^d$.
For each $x\in\bR^d$, we compute
$$
\|\phi(\cdot-x)\|_{\cW_1}=\Big(\int_{\bR^d} \frac{|\hat{\phi}(\xi)e^{i2\pi x\cdot\xi}|^p}{\hat{\phi}(\xi)} d\xi\Big)^{1/p}=\int_{\bR^d}|\hat{\phi}(\xi)|^{p-1}d\xi<+\infty,
$$
which implies $\phi(\cdot-x)\in\cW_1$ for all $x\in\bR^d$.  Furthermore,  for any $v\in\cW_2$,
$$
\langle \phi(\cdot-x),v\rangle_{\cW_1\times\cW_2}=\int_{\bR^d}\frac{\hat{\phi}(\xi)e^{i2\pi x\cdot\xi}\hat{v}(\xi)}{\hat{\phi}(\xi)}d\xi=v(x)=0 \mbox{ for all }x\in\bR^d
$$
implies $v=0$. Thus, $\span \Phi_1(\Omega_1)=\span\{\phi(\cdot-x):x\in\bR^d\}$
is dense in $\cW_1$ with respect to the bilinear form $\langle \cdot,\cdot\rangle_{\cW_1\times\cW_2}$. That $\span\Phi_2(\Omega_2)$ is dense in $\cW_2$ with respect to the bilinear form can be proved in a similar manner. The denseness condition (\ref{density*}) is hence satisfied.

Let $K(x,y):=\langle \Phi_1(x),\Phi_2(y)\rangle_{\cW_1\times\cW_2}=\langle \phi(\cdot-x),\phi(\cdot-y)\rangle_{\cW_1\times\cW_2}$. By  (\ref{eq36}), we have
$$
K(x,y)=\int_{\bR^d}\frac{\hat{\phi}(\xi) e^{i2\pi x\cdot\xi} \hat{\phi}(\xi)e^{-i 2\pi y\cdot\xi}}{\hat{\phi}(\xi)}d\xi=\int_{\bR^d}\hat{\phi}(\xi)e^{i2\pi (x-y)\cdot\xi} d\xi=\phi(x-y),\ x,y\in\bR^d.
$$
The proof is complete.
\end{proof}

We  remark that the space
$$
\cB_1=\cW_2=\Big\{v\in L^p(\bR^d)\cap C(\bR^d): S_{\hat{v}}\subseteq S_{\hat{\phi}}, \hat{v}/\hat{\phi}^{1/q}\in L^q(\bR^d)\Big\}
$$
in Example \ref{example313} is $\cB_{\Phi}^p(\bR^d)$, $2\le p<+\infty$ in Theorem 1 of \cite{Ye2013}.

\subsection{$p$-norm RKBSs}

The use of semi-inner products in the construction of RKBSs has its limitations. To overcome this issue and for the sake of sparse learning, Xu and Ye \cite{XY} constructed a class of $p$-norm RKBSs via generalized Mercer kernels.

The generalized Mercer kernel used in \cite{XY} takes the following form
\begin{equation}\label{Mercer}
K(x,y)=\sum_{n\in\bZ}\phi_n(x)\psi_n(y),\ x\in\Omega_1, y\in\Omega_2
\end{equation}
where $\{\phi_n:n\in\bZ\}$ and $\{\psi_n:n\in\bZ\}$ are sequences of functions on $\Omega_1$ and $\Omega_2$, respectively. For instance, the Gaussian kernel and the Brownian bridge kernel are generalized Mercer kernels.
Denote by $c_0$ the Banach space of sequences on $\bZ$ that vanish at infinity and are endowed with the supremum norm.
We show below that $p$-norm RKBSs fall  into our framework.

\begin{example}\label{examplep}  Let $1<p<+\infty$, $1/p+1/q=1$. Choose
$$
\cW_1:=\ell^{q}, \quad \cW_2:=\ell^{p},
$$
$$
\Phi_1:\Omega_1\to\cW_1,\quad \Phi_1(x):=(\phi_n(x):n\in\bZ),
$$
$$
\Phi_2:\Omega_2\to\cW_2, \quad \Phi_2(y):=(\psi_n(y):n\in\bZ)
$$
in Theorem \ref{Theorem1} such that $\cW_1=\overline{\span}\Phi_1(\Omega_1)$ and $\cW_2=\overline{\span}\Phi_2(\Omega_2)$. Then
$$
\cB_1:=\Big\{f_v(x):=\langle \Phi_1(x),v\rangle_{\cW_1}=\sum_{n\in\bZ}v_n\phi_n(x): x\in\Omega_1,v:=(v_n:n\in\bZ)\in\cW_2\Big\}
$$
with norm $\|f_v\|_{\cB_1}:=\|v\|_{\ell^{p}}$ is an RKBS on $\Omega_1$. Its adjoint RKBS is
$$
\cB_2:=\Big\{g_u(y):=\langle u,\Phi_2(y)\rangle_{\cW_1}=\sum_{n\in\bZ}u_n\psi_n(y): y\in\Omega_2,u:=(u_n:n\in\bZ)\in\cW_1\Big\}
$$
with norm $\|g_u\|_{\cB_2}:=\|u\|_{\ell^{q}}$.  The bilinear form on $\cB_1\times\cB_2$ is defined by
$$
\langle f_v,g_u\rangle_{\cB_1\times\cB_2}:=\langle u,v\rangle_{\cW_1}=\sum_{n\in\bZ}u_nv_n.
$$
Moreover,  $K$ defined as in (\ref{Mercer}) is a reproducing kernel for $\cB_1$.
\end{example}
\begin{proof}  Note that $\ell^{p}=(\ell^{q})^*$, where $1/{p}+1/{q}=1$, $1<p<+\infty$. The denseness condition (\ref{density}) is hence satisfied. The claim follows from Theorem \ref{Theorem1}.
\end{proof}

The RKBSs $\cB_1$ and $\cB_2$ in Example \ref{examplep} are exactly $\cB_K^p(\Omega_1)$ and $\cB^q_{K'}(\Omega_2)$ defined as in (3.5) and (3.6) of \cite{XY}, respectively. Here, $K'(x,y):=K(y,x)$, $x\in\Omega_2,y\in\Omega_1$.

\begin{example}\label{example1} Choose
 $$
\cW_1=c_0, \quad \cW_2=\ell^1,
$$
$$
 \Phi_1:\Omega_1\to\cW_1, \quad \Phi_1(x):=(\phi_n(x):n\in\bZ),
$$
$$
\Phi_2:\Omega_2\to\cW_2,\quad  \Phi_2(y):=(\psi_n(y):n\in\bZ)
$$
in Theorem \ref{Theorem1} such that $\cW_1=\overline{\span}\Phi_1(\Omega_1)$ and $\cW_2=\overline{\span}\Phi_2(\Omega_2)$. Then
$$
\cB_1:=\Big\{f_v(x):=\langle \Phi_1(x),v\rangle_{\cW_1}=\sum_{n\in\bZ}v_n\phi_n(x):v:=(v_n:n\in\bZ)\in\cW_2, x\in\Omega_1\Big\}
$$
with norm $\|f_v\|_{\cB_1}:=\|v\|_{\ell^1}$ is an RKBS on $\Omega_1$. Its adjoint RKBS is
$$
\cB_2:=\Big\{g_u(y):=\langle u,\Phi_2(y)\rangle_{\cW_1}=\sum_{n\in\bZ}u_n\psi_n(y): u:=(u_n:n\in\bZ)\in\cW_1,y\in\Omega_2\Big\}
$$
with norm $\|g_u\|_{\cB_2}:=\|u\|_{c_0}$. The bilinear form on $\cB_1\times\cB_2$ is defined by
$$
\langle f_v,g_u\rangle_{\cB_1\times\cB_2}:=\langle u,v\rangle_{\cW_1}=\sum_{n\in\bZ}u_nv_n.
$$
Moreover, $K$ defined as in (\ref{Mercer}) is a reproducing kernel for $\cB_1$.
\end{example}
\begin{proof} Note that $\cW_2=\ell^1=\cW_1^*=c_0^*$.  The denseness condition (\ref{density}) is hence satisfied. By Theorem \ref{Theorem1}, the proof is complete.
\end{proof}

The RKBSs $\cB_1$ and $\cB_2$ in Example \ref{example1} are exactly $\cB_K^1(\Omega_1)$ and $\cB^\infty_{K'}(\Omega_2)$ defined as in (3.14) and (3.15) of \cite{XY}, respectively.

\section{Representer theorems for machine learning in RKBSs}

Most machine learning tasks boil down to a regularized minimization problem. When kernel methods are used, a representer theorem asserts that the minimizer is a linear combination of the kernel functions at the sampling points. This is key to the mathematical analysis of kernel methods in machine learning \cite{CS02,CZ07,SS02}. The classical representer theorem in RKHSs was first established by Kimeldorf and Wahba \cite{KW71}.  The result was generalized to non-quadratic loss functions
in \cite{Cox90}, and to general regularizers in \cite{SHS01}. Recent references \cite{SZH13,XY,Ye2013,ZXZ09,ZZ12}  developed representer theorems for
various RKBSs introduced in the previous section. The primary purpose of this section is to present a representer theorem for RKBSs constructed in our
framework, thus unifying the representer theorems in the references.

Let $\cB_1$ be an RKBS constructed as in Theorem \ref{Theorem1} via a continuous bilinear form and a pair of feature maps.  We shall
establish a representer theorem for the regularization network in $\cB_1$.  We begin the analysis with the related minimal norm interpolation in $\cB_1$.

\subsection{Minimal norm interpolation}

The problem of minimal norm interpolation is to find a function with the smallest norm in $\cB_1$ that interpolates a prescribed set of sampled data.
Let $\bN_m:=\{1,2,\dots,m\}$, $\bx:=\{x_j:j\in\bN_m\}\subseteq\Omega_1$ be a set of $m$ pairwise distinct inputs, and $\bt:=\{t_j:j\in\bN_m\}\subseteq\bC$ be the
corresponding outputs.

The minimal norm interpolation problem looks for the minimizer
\begin{equation}\label{Interpolation}
f_{\inf}:=\arg\inf_{f\in S_{\bx,\bt}}\|f\|_{\cB_1}\mbox{ where }S_{\bx,\bt}=\Big\{f\in \cB_1: f(x_j)=t_j,\ j\in\bN_m\Big\}
\end{equation}
provided that it exists and is unique. By  (\ref{B1}) and the denseness condition (\ref{density*}),  (\ref{Interpolation}) can be equivalently reformulated as
$$
f_{\inf}=\langle \Phi_1(\cdot),v_{\inf}\rangle_{\cW_1\times\cW_2}
$$
where
\begin{equation}\label{Interpolation0}
v_{\inf}:=\arg\inf_{v\in V_{\bx,\bt}}\|v\|_{\cW_2}
\end{equation}
with
\begin{equation}\label{Interpolation1}
V_{\bx,\bt}:=\Big\{v\in\cW_2: \langle \Phi_1(x_j),v\rangle_{\cW_1\times\cW_2}=t_j,\ j\in\bN_m\Big\}.
\end{equation}
In the special case when $t_j=0$ for every $1\le j\le m$,
\begin{equation}\label{Vx0}
V_{\bx,0}=\Big\{v\in\cW_2: \langle \Phi_1(x_j),v\rangle_{\cW_1\times\cW_2}=0: j\in\bN_m\Big\}=(\Phi_1(\bx))^{\vdash},
\end{equation}
where $\Phi_1(\bx):=\{\Phi_1(x_1),\Phi_1(x_2),\dots,\Phi_1(x_m)\}$.
Here, for a subset $A\subseteq \cW_1$,
$$
A^{\vdash}:=\Big\{v\in \cW_2: \langle a,v\rangle_{\cW_1\times\cW_2}=0\mbox{ for all }a\in A\Big\}.
$$
Recall that for a subset $A$ in a normed vector space $V$,
$$
A^{\perp}:=\Big\{w\in V^*: w(a)=0\mbox{ for all }a\in A\Big\}.
$$
When $\cW_2\subseteq \cW_1^*$, $A^{\vdash}\subseteq A^{\perp}$ for $A\subseteq\cW_1$.

Next, we shall explore the condition ensuring that $V_{\bx,\bt}$ is nonempty.

\begin{lemma}\label{Lemma31} The set $V_{\bx,\bt}$ defined by  (\ref{Interpolation1}) is  nonempty for any $\bt\in\bC^m$ if and only if $\{\Phi_1(x_j):j\in\bN_m\}$ is linearly independent in $\cW_1$.
\end{lemma}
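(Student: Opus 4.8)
The plan is to prove both directions of the equivalence by relating the nonemptiness of $V_{\bx,\bt}$ to the surjectivity of a natural linear map. Define the sampling operator $T:\cW_2\to\bC^m$ by
$$
T v := \big(\langle \Phi_1(x_j),v\rangle_{\cW_1\times\cW_2}:j\in\bN_m\big),\quad v\in\cW_2.
$$
This is linear and bounded, since the bilinear form is continuous. With this notation, $V_{\bx,\bt}$ is precisely the preimage $T^{-1}(\bt)$, so $V_{\bx,\bt}\ne\emptyset$ for every $\bt\in\bC^m$ if and only if $T$ is surjective. The whole statement therefore reduces to proving that $T$ is onto $\bC^m$ if and only if $\{\Phi_1(x_j):j\in\bN_m\}$ is linearly independent in $\cW_1$.

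For the contrapositive of the ``only if'' direction, I would assume $\{\Phi_1(x_j)\}$ is linearly dependent and produce a $\bt$ with $V_{\bx,\bt}=\emptyset$. Linear dependence gives scalars $c_j$, not all zero, with $\sum_{j}c_j\Phi_1(x_j)=0$ in $\cW_1$. Pairing against any $v\in\cW_2$ and using bilinearity yields $\sum_j c_j\langle \Phi_1(x_j),v\rangle_{\cW_1\times\cW_2}=\langle \sum_j c_j\Phi_1(x_j),v\rangle_{\cW_1\times\cW_2}=0$. Hence the range of $T$ lies in the proper subspace $\{\bt\in\bC^m:\sum_j c_j t_j=0\}$, so $T$ is not surjective and any $\bt$ outside this hyperplane gives $V_{\bx,\bt}=\emptyset$.

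For the ``if'' direction, assume $\{\Phi_1(x_j)\}$ is linearly independent; I want $T$ surjective, equivalently the range of $T$ is all of $\bC^m$. The range is a subspace of $\bC^m$, so it suffices to show it is not contained in any hyperplane, i.e.\ that $\sum_j c_j\langle\Phi_1(x_j),v\rangle_{\cW_1\times\cW_2}=0$ for all $v\in\cW_2$ forces $c=0$. Rewriting via bilinearity, this hypothesis says $\langle\sum_j c_j\Phi_1(x_j),v\rangle_{\cW_1\times\cW_2}=0$ for all $v\in\cW_2$. Here is where the denseness condition (\ref{density*}) is essential: since $\span\Phi_2(\Omega_2)$ is dense in $\cW_2$ with respect to the bilinear form, an element of $\cW_1$ that pairs to zero against all of $\cW_2$ must be zero, giving $\sum_j c_j\Phi_1(x_j)=0$ in $\cW_1$. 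Linear independence of $\{\Phi_1(x_j)\}$ then yields $c=0$, as desired.

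The main obstacle, and the subtle point worth flagging, is the passage from ``$\langle w,v\rangle_{\cW_1\times\cW_2}=0$ for all $v\in\cW_2$ implies $w=0$'' in the second direction; this is exactly the denseness (nondegeneracy) built into (\ref{density*}) on the $\cW_2$ side, and without it the bilinear form could be degenerate and the argument would fail. I would therefore state explicitly that this is where denseness in $\cW_2$ is invoked. Everything else is routine linear algebra in $\bC^m$, and the equivalence follows by combining the two implications.
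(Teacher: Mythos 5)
Your proof is correct and follows essentially the same route as the paper: both reduce nonemptiness of $V_{\bx,\bt}$ for all $\bt$ to surjectivity of the sampling map (the paper phrases this as denseness of $\span\{(f(x_j):j\in\bN_m):f\in\cB_1\}$ in $\bC^m$), then test against linear functionals $c\in\bC^m$, pull the sum inside the bilinear form, and invoke the denseness condition (\ref{density*}) on the $\cW_2$ side to convert ``pairs to zero against all $v$'' into $\sum_j c_j\Phi_1(x_j)=0$. Your explicit flagging of where nondegeneracy enters is exactly the right point to emphasize, and no gap remains.
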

\begin{proof} One sees that $V_{\bx,\bt}$ is  nonempty for any $\bt\in\bC^m$  if and only if $\span\{(f(x_j):j\in\bN_m): f\in\cB_1\}$ is dense in $\bC^m$. Note that for each $f\in\cB_1$, there exists a unique $v\in \overline{\span}\{\Phi_2(y):y\in\Omega_2\}$ such that $f(x):=f_v(x)=\langle \Phi_1(x),v\rangle_{\cW_1\times\cW_2}$, $x\in\Omega_1$.  Using the reproducing property, we have for each $(c_j:j\in\bN_m)\in\bC^m$ that
$$
\sum_{j=1}^m c_j f(x_j)=\sum_{j=1}^m c_j \langle f_v, K(\cdot,x_j)\rangle_{\cB_1\times\cB_2}=\sum_{j=1}^m c_j\langle \Phi_1(x_j),v\rangle_{\cW_1\times\cW_2}=\Big\langle \sum_{j=1}^m c_j \Phi_1(x_j),v\Big\rangle_{\cW_1\times\cW_2}.
$$
By the denseness condition (\ref{density*}), the above equation implies that $\{\Phi_1(x_j):j\in\bN_m\}$ is linearly independent in $\cW_1$ if and only if $\span\{(f(x_j):j\in\bN_m): f\in\cB_1\}$ is dense in $\bC^m$. The proof is complete.
\end{proof}

Before moving on, we need several concepts from the theory of Banach spaces (see, for instance, Sections 1.11, 5.1 and 5.4  in \cite{Megginson98}). A normed vector space $V$ is {\it strictly convex (rotund)} if $\|t f+(1-t)g\|_V<1$ whenever $\|f\|_V=\|g\|_V=1$, $f\ne g$, and $0<t<1$, and is {\it G\^{a}teaux differentiable} if for all $f,h\in V\setminus\{0\}$, $\lim_{\tau\to0}\frac{\|f+\tau h\|_V-\|f\|_V}{\tau}$ exists. For each $f\ne0$ in a G\^{a}teaux differentiable normed vector space $V$, there exists a bounded linear functional, denoted by $\cG(f)\in V^*$ and called a G\^{a}teaux derivative of $f$, such that
 $$
\langle h, \cG(f)\rangle_V=\lim_{\tau\to0}\frac{\|f+\tau h\|_V-\|f\|_V}{\tau}\mbox{ for all }h\in V.
 $$
We make a convention that $\cG(f)=0$ if $f=0$.

Reflexivity and strict convexity of a Banach space ensure existence and uniqueness of the best approximation in the space (see, Corollary 5.1.19 in \cite{Megginson98}).
\begin{lemma}\cite{Megginson98}\label{Lemma32} If $V$ is a reflexive and strictly convex Banach space, then for any nonempty closed convex subset $A\subseteq V$ and any $x\in V$ there exists a unique $x_0\in A$ such that
$$
\|x-x_0\|_{V}=\inf\Big\{\|x-a\|_{V}:a\in A\Big\}.
$$
\end{lemma}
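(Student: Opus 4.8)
The plan is to prove existence and uniqueness separately, obtaining existence from reflexivity and uniqueness from strict convexity. Set $d:=\inf\{\|x-a\|_V:a\in A\}$, a well-defined nonnegative number because $A$ is nonempty.

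For existence, I would fix a minimizing sequence $(a_n)\subseteq A$ with $\|x-a_n\|_V\to d$. This sequence is bounded, since $\|a_n\|_V\le\|x-a_n\|_V+\|x\|_V$ remains bounded. As $V$ is reflexive, bounded sequences admit weakly convergent subsequences, so some $(a_{n_k})$ converges weakly to a point $x_0\in V$. A closed convex subset of a normed space is weakly closed (Mazur's theorem), whence $x_0\in A$. Since the norm is weakly lower semicontinuous, $\|x-x_0\|_V\le\liminf_k\|x-a_{n_k}\|_V=d$; together with $\|x-x_0\|_V\ge d$ (as $x_0\in A$) this yields $\|x-x_0\|_V=d$, so a minimizer exists.

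For uniqueness, suppose $x_0,x_1\in A$ both attain $d$. If $d=0$ then $x_0=x=x_1$, so assume $d>0$. Convexity places the midpoint $\tfrac12(x_0+x_1)$ in $A$, and the triangle inequality gives $\|x-\tfrac12(x_0+x_1)\|_V\le\tfrac12\|x-x_0\|_V+\tfrac12\|x-x_1\|_V=d$, forcing $\|x-\tfrac12(x_0+x_1)\|_V=d$. Writing $u:=(x-x_0)/d$ and $w:=(x-x_1)/d$, both of unit norm, this says $\|\tfrac12(u+w)\|_V=1$. The definition of strict convexity (applied with $t=\tfrac12$) forbids this whenever $u\ne w$; hence $u=w$, that is $x_0=x_1$.

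I expect the existence step to be the crux, with the one delicate point being the extraction of a weakly convergent subsequence from the bounded minimizing sequence. This is precisely where reflexivity enters: by Kakutani's theorem the closed unit ball of $V$ is weakly compact, and weak sequential compactness of bounded sets then follows from the Eberlein--\v{S}mulian theorem (or, more concretely, by passing to the separable closed subspace generated by $x$ and the $a_n$, on which the weak topology is metrizable on bounded sets). The remaining tools --- Mazur's theorem and weak lower semicontinuity of the norm --- are standard, and the uniqueness argument is an elementary application of strict convexity.
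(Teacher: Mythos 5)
Your proof is correct. Note that the paper itself gives no proof of this lemma: it is quoted as a known result (Corollary 5.1.19 in Megginson's \emph{An Introduction to Banach Space Theory}), so there is no internal argument to compare against. Your argument is the standard textbook one --- existence from a bounded minimizing sequence via weak sequential compactness (Kakutani plus Eberlein--\v{S}mulian), Mazur's theorem, and weak lower semicontinuity of the norm; uniqueness from strict convexity applied to the normalized midpoint --- and every step, including the degenerate case $d=0$, is handled correctly.
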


The last lemma needed is about orthogonality in normed vector spaces (see,  page 272,  \cite{James47}). Let $V$ be a normed vector space. We say that $f\in V$ is orthogonal to $g\in V$ if $\|f+\tau g\|_{V}\ge\|f\|_{V}$ for all $\tau\in\bC$. We call $f\in V$ orthogonal to a subspace $\cN$ of $V$ if it is orthogonal to every vector in $\cN$.

 \begin{lemma}\label{Lemma33}\cite{James47} If a normed vector space $V$ is G\^{a}teaux differentiable, then $f\in V$ is orthogonal to $g\in V$ if and only if $\langle g,\cG(f)\rangle_{V}=0$.
\end{lemma}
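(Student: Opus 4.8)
The plan is to convert the two-sided inequality defining orthogonality into the statement that the real-variable function $\tau\mapsto\|f+\tau g\|_V$ is globally minimized at $\tau=0$, and then to exploit convexity of the norm together with the G\^ateaux differentiability hypothesis to recast this minimality as the vanishing of the directional derivative, which is precisely encoded by $\langle g,\cG(f)\rangle_V$.

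First I would dispose of the degenerate cases. If $f=0$, then $\|f+\tau g\|_V=|\tau|\,\|g\|_V\ge0=\|f\|_V$ for every $\tau$, so $f$ is orthogonal to $g$; simultaneously $\cG(f)=0$ by convention, so $\langle g,\cG(f)\rangle_V=0$, and the equivalence holds. If $g=0$ both sides are trivially true. So assume $f,g\neq0$ and set $\rho(\tau):=\|f+\tau g\|_V$. Since $\tau\mapsto f+\tau g$ is affine and $\|\cdot\|_V$ is convex, $\rho$ is a convex function of $\tau$, regarded as a function on $\bR^2\cong\bC$. By hypothesis $V$ is G\^ateaux differentiable, so the directional derivative of $\rho$ at $0$ along any fixed $c\in\bC$ exists and equals $\Re\big(c\,\langle g,\cG(f)\rangle_V\big)$; in particular, along the directions $c=1$ and $c=i$ these derivatives are $\Re\langle g,\cG(f)\rangle_V$ and $-\Im\langle g,\cG(f)\rangle_V$. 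Hence every directional derivative of $\rho$ at $0$ vanishes if and only if $\langle g,\cG(f)\rangle_V=0$.

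With this in hand the equivalence follows in both directions from the elementary fact that a convex function which is differentiable at an interior point attains its global minimum there exactly when all of its directional derivatives vanish. For the forward implication, orthogonality $f\perp g$ says $\rho(\tau)\ge\rho(0)$ for all $\tau\in\bC$, so $0$ is a global minimizer of $\rho$; differentiability at $0$ (where $f\neq0$) then forces every directional derivative at $0$ to be zero, hence $\langle g,\cG(f)\rangle_V=0$. Conversely, if $\langle g,\cG(f)\rangle_V=0$, then every directional derivative of the convex function $\rho$ at $0$ vanishes, so $0$ is a global minimizer, i.e. $\|f+\tau g\|_V\ge\|f\|_V$ for all $\tau\in\bC$, which is precisely $f\perp g$.

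The main obstacle is the bookkeeping between the real directional derivative that G\^ateaux differentiability supplies and the complex-linear pairing $\langle\cdot,\cG(f)\rangle_V$: orthogonality here is demanded for all $\tau\in\bC$, a genuinely two-real-dimensional minimality condition, whereas differentiability is phrased through a single real limit. The resolution is to treat $\rho$ as a convex function of the complex variable $\tau$ and to observe that the two directions $g$ and $ig$ already determine $\langle g,\cG(f)\rangle_V$ completely through its real and imaginary parts, so the single complex equation $\langle g,\cG(f)\rangle_V=0$ captures minimality in every direction. The only other point requiring care is the convexity step itself, since it is convexity of the norm, not mere differentiability, that upgrades a critical point of $\rho$ into a global minimizer.
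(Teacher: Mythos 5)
Your proof is correct, and there is nothing in the paper to compare it against: the paper states this lemma as a citation to James's 1947 paper and gives no argument of its own, so your write-up supplies a proof that the paper leaves to the literature. Your route is in fact the classical one behind James's result: orthogonality of $f$ to $g$ is exactly global minimality of the convex function $\rho(\tau)=\|f+\tau g\|_V$ at $\tau=0$; G\^ateaux differentiability makes the two one-sided derivatives of $\rho$ along each line through $0$ agree, which turns minimality into the vanishing of all directional derivatives (forward direction, convexity not even needed); and monotonicity of convex difference quotients upgrades a critical point to a global minimizer (converse direction, where convexity is essential --- a point you rightly flag). The place where you go beyond the cited source is worth keeping explicit: James works over real scalars, whereas here the space is complex, so orthogonality is a two-real-dimensional condition while differentiability is a single real limit. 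Your bookkeeping --- identifying the directional derivative along $c\in\bC$ with $\Re\big(c\langle g,\cG(f)\rangle_V\big)$ and observing that the directions $c=1$ and $c=i$ recover the real and imaginary parts of the pairing --- is exactly what is needed, and it also quietly repairs a small imprecision in the paper's definition of $\cG(f)$: a complex-linear functional cannot literally equal a real-valued limit, only its real part can, and your argument uses the correct interpretation. The degenerate cases $f=0$ and $g=0$ are handled consistently with the paper's convention $\cG(0)=0$. No gaps.
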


We are now ready to develop a representer theorem for the minimal norm interpolation in RKBSs constructed in our framework.

\begin{theorem}\label{Theorem2} (Representer theorem) Assume the same assumptions as in Theorem \ref{Theorem1}. In addition, suppose that $\cW_2$ is reflexive, strictly convex and G\^{a}teaux differentiable, and the set $\{\Phi_1(x_j):j\in\bN_m\}$ is linearly independent in $\cW_1$. Then the minimal norm interpolation problem (\ref{Interpolation0}) has a unique solution $v_{\inf}\in\cW_2$  and it satisfies
\begin{equation}\label{vinf}
\cG(v_{\inf})\in  \big((\Phi_1(\bx))^{\vdash} \big)^{\perp}.
\end{equation}
\end{theorem}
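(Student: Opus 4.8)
The plan is to treat the two assertions of the statement separately: first existence and uniqueness of $v_{\inf}$, then the variational characterization (\ref{vinf}). For existence and uniqueness, I would begin by checking that $V_{\bx,\bt}$ satisfies the hypotheses of Lemma \ref{Lemma32}. Each map $v\mapsto\langle \Phi_1(x_j),v\rangle_{\cW_1\times\cW_2}$ is a continuous linear functional on $\cW_2$ (this is exactly the continuity of the bilinear form from Theorem \ref{Theorem1}), so $V_{\bx,\bt}$, being the intersection of preimages of single points under these functionals, is closed and convex; it is nonempty by Lemma \ref{Lemma31} together with the linear independence of $\{\Phi_1(x_j):j\in\bN_m\}$. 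Since $\cW_2$ is reflexive and strictly convex, Lemma \ref{Lemma32} applied to the closed convex set $A=V_{\bx,\bt}$ and the point $0$ produces a unique $v_{\inf}\in V_{\bx,\bt}$ minimizing $\|0-v\|_{\cW_2}=\|v\|_{\cW_2}$, which is precisely the solution of (\ref{Interpolation0}).

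For the characterization, the key structural observation is that $V_{\bx,\bt}$ is an affine translate of the subspace $(\Phi_1(\bx))^{\vdash}$. Indeed, any two interpolants differ by a vector annihilating all the $\Phi_1(x_j)$, so by (\ref{Vx0}) we have $V_{\bx,\bt}=v_{\inf}+(\Phi_1(\bx))^{\vdash}$. Consequently, for every $w\in(\Phi_1(\bx))^{\vdash}$ and every $\tau\in\bC$ the vector $v_{\inf}+\tau w$ again lies in $V_{\bx,\bt}$, and the minimality of $v_{\inf}$ forces $\|v_{\inf}+\tau w\|_{\cW_2}\ge\|v_{\inf}\|_{\cW_2}$. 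By the definition recalled before Lemma \ref{Lemma33}, this says exactly that $v_{\inf}$ is orthogonal (in the James sense) to every vector of the subspace $(\Phi_1(\bx))^{\vdash}$.

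Finally, since $\cW_2$ is G\^{a}teaux differentiable, I would invoke Lemma \ref{Lemma33} to convert this orthogonality into the analytic statement $\langle w,\cG(v_{\inf})\rangle_{\cW_2}=0$ for every $w\in(\Phi_1(\bx))^{\vdash}$. By the definition of the annihilator this reads $\cG(v_{\inf})\in\big((\Phi_1(\bx))^{\vdash}\big)^{\perp}$, which is (\ref{vinf}). Each step is short given Lemmas \ref{Lemma31}--\ref{Lemma33}, and I do not expect a genuine obstacle; the only points requiring care are the bookkeeping that verifies the closedness and nonemptiness hypotheses so that Lemma \ref{Lemma32} applies, and the correct identification of $V_{\bx,\bt}$ as the affine translate $v_{\inf}+(\Phi_1(\bx))^{\vdash}$, which is what lets James orthogonality to the \emph{entire} subspace be extracted from a single minimality condition.
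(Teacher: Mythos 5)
Your proposal is correct and follows essentially the same route as the paper: nonemptiness of $V_{\bx,\bt}$ via Lemma \ref{Lemma31}, existence and uniqueness via Lemma \ref{Lemma32}, the identification $V_{\bx,\bt}=v_{\inf}+(\Phi_1(\bx))^{\vdash}$, and James orthogonality converted through Lemma \ref{Lemma33}. The only cosmetic difference is that the paper disposes of the trivial case $v_{\inf}=0$ separately (using the convention $\cG(0)=0$), which your argument also covers implicitly.
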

\begin{proof}  By Lemma \ref{Lemma31}, linear independence of $\{\Phi_1(x_j):j\in\bN_m\}$ in $\cW_1$ implies that $V_{\bx,\bt}$ is nonempty. Clearly, $V_{\bx,\bt}$ is closed and convex in $\cW_2$. By Lemma \ref{Lemma32}, there exists a unique $v\in V_{\bx,\bt}$, denoted by $v_{\inf}$, such that
$$
\|v_{\inf}\|_{\cW_2}=\inf_{v\in V_{\bx,\bt}}\|v\|_{\cW_2}.
$$
If $v_{\inf}=0$ in $\cW_2$, then (\ref{vinf}) holds. Observe that $v_{\inf}+V_{\bx,0}=V_{\bx,\bt}$, where $V_{\bx,0}$ is defined by (\ref{Vx0}). Since $v_{\inf}$ is the minimizer for the minimal norm interpolation problem (\ref{Interpolation0}) and $v_{\inf}+v\in V_{\bx,\bt}$ for each $v\in V_{\bx,0}$, we have
$$
\|v_{\inf}+v\|_{\cW_2}\ge \|v_{\inf}\|_{\cW_2} \mbox{ for all }v\in V_{\bx,0}.
$$
By Lemma \ref{Lemma33}, $\langle v,\cG(v_{\inf})\rangle_{\cW_2}=0$ for all $v\in V_{\bx,0}$, which implies $\cG(v_{\inf})\in V_{\bx,0}^{\perp}$. The proof is complete.
\end{proof}

When $\cW_2=\cW_1^*$, the above result can be simplified.
\begin{corollary}\label{CorollaryRepresenter}  Assume the same assumptions as in Theorem \ref{Theorem2}. If, in addition, $\cW_2=\cW_1^*$, then the minimal norm interpolation problem (\ref{Interpolation0}) has a unique solution $v_{\inf}\in\cW_2$  and it satisfies
$$
\cG(v_{\inf})\in  \span\Phi_1(\bx).
$$
\end{corollary}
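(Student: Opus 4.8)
The plan is to take the conclusion of Theorem \ref{Theorem2} as given—existence and uniqueness of $v_{\inf}$ together with the membership $\cG(v_{\inf})\in\big((\Phi_1(\bx))^{\vdash}\big)^{\perp}$—and to show that the extra hypothesis $\cW_2^*=\cW_1$ forces the right-hand side to collapse onto $\span\Phi_1(\bx)$. Since existence and uniqueness are already supplied, the corollary reduces to verifying the set identity
$$
\big((\Phi_1(\bx))^{\vdash}\big)^{\perp}=\span\Phi_1(\bx),
$$
where the outer $\perp$ is taken in $(\cW_2)^*=\cW_1$.

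The first step is to reinterpret the annihilators using $\cW_2^*=\cW_1$. Under this identification each $\Phi_1(x_j)\in\cW_1$ is a bounded linear functional on $\cW_2$, and the pairing $\langle\Phi_1(x_j),v\rangle_{\cW_1\times\cW_2}$ is simply its evaluation at $v$. Hence
$$
(\Phi_1(\bx))^{\vdash}=\Big\{v\in\cW_2:\Phi_1(x_j)(v)=0,\ j\in\bN_m\Big\}=\bigcap_{j=1}^m\ker\Phi_1(x_j),
$$
and since $\cG(v_{\inf})\in\cW_2^*=\cW_1$ as well, the inclusion $\cG(v_{\inf})\in\big((\Phi_1(\bx))^{\vdash}\big)^{\perp}$ says precisely that $\bigcap_{j=1}^m\ker\Phi_1(x_j)\subseteq\ker\cG(v_{\inf})$.

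The second step is to apply the elementary lemma from linear algebra: if $f,\phi_1,\dots,\phi_m$ are linear functionals on a vector space with $\bigcap_{j=1}^m\ker\phi_j\subseteq\ker f$, then $f\in\span\{\phi_1,\dots,\phi_m\}$. Taking $\phi_j=\Phi_1(x_j)$ and $f=\cG(v_{\inf})$ yields $\cG(v_{\inf})\in\span\Phi_1(\bx)$ immediately, which is the desired conclusion.

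I do not anticipate a genuine obstacle; the whole argument is the translation of $\big((\Phi_1(\bx))^{\vdash}\big)^{\perp}$ into a double-annihilator statement for finitely many functionals, and the identification $\cW_2^*=\cW_1$ is exactly what lets me read both the $\Phi_1(x_j)$ and $\cG(v_{\inf})$ as functionals on the single space $\cW_2$. The only point requiring a little attention is confirming that the finiteness of $\Phi_1(\bx)$—equivalently the finite-dimensionality of its span—makes the annihilator identity exact; one may alternatively justify this through the bipolar theorem, noting that a finite-dimensional subspace of $\cW_2^*$ is weak*-closed, but the bare linear-functional lemma is the most direct route.
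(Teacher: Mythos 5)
Your proposal is correct and follows essentially the same route as the paper: both reduce the corollary to the double-annihilator identity $\big((\Phi_1(\bx))^{\vdash}\big)^{\perp}=\span\Phi_1(\bx)$ under the identification $\cW_2^*=\cW_1$, the paper simply asserting this equality while you justify it via the elementary lemma that a linear functional vanishing on $\bigcap_{j=1}^m\ker\Phi_1(x_j)$ lies in $\span\{\Phi_1(x_j):j\in\bN_m\}$. That extra justification is a welcome (and correct) filling-in of a step the paper leaves implicit.
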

\begin{proof}   Since  $\cW_2=\cW_1^*$ and $\cW_2$ is reflexive, by Theorem \ref{Theorem2}, we have
$$
\cG(v_{\inf})\in  ((\Phi_1(\bx))^{\perp})^{\perp}=\span\Phi_1(\bx).
$$
The proof is complete.
\end{proof}

We shall make some comments on the assumptions of Corollary \ref{CorollaryRepresenter}. Firstly, note that the constructed spaces $\cB_1$ and $\cB_2$ are isomorphic to $\cW_2$ and $\cW_1$, respectively. Therefore, $\cW_2=\cW_1^*$ is equivalent to $\cB_2=\cB_1^*$ (in the sense of isomorphism). By the discussion in Section 3, the condition $\cW_2=\cW_1^*$ is satisfied by RKHSs, Orlicz RKBSs, reflexive RKBSs, semi-inner-product RKBSs, RKBSs with Borel measures, RKBSs with positive definite functions, and the $p$-norm RKBSs ($1<p<+\infty$), but not satisfied by RKBSs with the $\ell^1$ norm or the 1-norm RKBSs. Secondly, for $1<p<+\infty$, $\ell^p$ and $L^p$ are reflexive, strictly convex, and G\^{a}teaux differentiable. It is also well-known that $\ell^1$ is non-reflexive. As a result, the properties of reflexivity, strict convexity, and G\^{a}teaux differentiability are satisfied by the semi-inner product RKBSs, the $p$-norm RKBSs ($1<p<+\infty$), and RKBSs with positive definite functions, but are not satisfied by the RKBSs with Borel measures, RKBSs with the $\ell^1$ norm, or the 1-norm RKBSs \cite{XY}. Consequently, additional requirements have to be imposed to ensure a linear representer theorem in the latter three spaces. For instance, a uniform boundedness of the Lebesgue constant condition on the reproducing kernel was imposed in \cite{SZH13} for the RKBS with the $\ell^1$-norm.

In addition, we remark that when an RKBS reduces to an RKHS, the above results recover the classical representer theorem for minimal norm interpolation in RKHSs.

\subsection{Regularization networks}

We consider learning a function from a prescribed set of finite sampling data
$$
\bz:=\{(x_j,t_j):j\in\bN_m\}\subseteq \Omega_1\times\bC.
$$
Let $L_{\bt}:\bC^m\to \bR_+$ be a loss function that is continuous and convex. For each $f\in\cB_1$, we set
$$
\cE_{\bz,\lambda}(f):= L_{\bt}(f({\bx}))+\lambda \phi(\|f\|_{\cB_1}),
$$
where $f({\bx}):=(f(x_j):j\in\bN_m)$, and the regularization function $\phi:\bR_+\to\bR_+$ is continuous, convex and strictly increasing with $\lim_{t\to +\infty}\phi(t)=+\infty$.
A regularization network in an RKBS $\cB_1$ takes the form:
\begin{equation}\label{RN1}
\inf_{f\in\cB_1} \cE_{\bz,\lambda}(f).
\end{equation}
Note that each $f\in\cB_1$ corresponds to a unique $v\in\overline{\span}\{\Phi_2(y):y\in\Omega_2\}$ such that
$$
f(x):=f_v(x)=\langle  \Phi_1(x),v\rangle_{\cW_1\times\cW_2},\ x\in\Omega_1.
$$
Thus, (\ref{RN1}) reduces to
\begin{equation}\label{RN2}
{\bf v}_{\inf}:=\arg\inf_{v\in\cW_2}L_{\bt}\big( (\langle \Phi_1(x_j),v\rangle_{\cW_1\times\cW_2}:j\in\bN_m)\big)+\lambda\phi(\|v\|_{\cW_2}).
\end{equation}

In order to prove the existence of the minimizer for the regularization network, we need the following result (see, Proposition 6, page 75, \cite{Ekeland83}).
\begin{lemma}\label{Lemma34}\cite{Ekeland83} Let $V$ be a reflexive Banach space and $F:V\to \bR\cup\{+\infty\}$ be convex and lower semi-continuous. If there is an $M\in\bR$ such that the set $\{v\in V: F(v)\le M\}$ is  nonempty and bounded, then $F$ attains its minimum on $V$.
\end{lemma}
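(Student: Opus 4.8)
The plan is to run the direct method of the calculus of variations. First I would set $\alpha:=\inf_{v\in V}F(v)$; since the sublevel set $\{v\in V:F(v)\le M\}$ is nonempty we have $\alpha\le M<+\infty$, and because $F$ takes values in $\bR\cup\{+\infty\}$ it never takes the value $-\infty$, so a priori $\alpha$ lies in $[-\infty,M]$. Choose a minimizing sequence $(v_n)\subseteq V$ with $F(v_n)\to\alpha$. For all sufficiently large $n$ one has $F(v_n)\le M$, so the tail of $(v_n)$ lies in the set $\{F\le M\}$, which is assumed bounded; hence $(v_n)$ is bounded in $V$.

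The next step uses reflexivity. By Kakutani's theorem the closed unit ball of the reflexive space $V$ is weakly compact, and by the Eberlein--\v{S}mulian theorem weak compactness coincides with weak sequential compactness; hence the bounded sequence $(v_n)$ admits a subsequence $(v_{n_k})$ converging weakly to some $v_*\in V$. It then remains only to show $F(v_*)\le\alpha$, for in that case $F(v_*)=\alpha$, the infimum is finite, and it is attained at $v_*$.

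The crucial analytic ingredient is that a convex, strongly lower semi-continuous function is automatically weakly lower semi-continuous. I would prove this by observing that for every $c\in\bR$ the sublevel set $\{v\in V:F(v)\le c\}$ is convex, because $F$ is convex, and closed, because $F$ is lower semi-continuous; a closed convex set in a normed space is weakly closed by the Hahn--Banach separation theorem (Mazur's lemma). Thus every sublevel set is weakly closed, which is exactly the assertion that $F$ is weakly lower semi-continuous. Applying this to the weakly convergent subsequence yields $F(v_*)\le\liminf_{k\to\infty}F(v_{n_k})=\alpha$, which completes the argument.

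The hard part will be the passage from strong to weak lower semi-continuity: without convexity it fails in general, and the whole argument hinges on separating the weak limit $v_*$ from the strongly closed convex sublevel sets via Hahn--Banach. Reflexivity enters only once, to extract the weakly convergent subsequence, and the boundedness hypothesis on $\{F\le M\}$ is precisely what supplies a bounded minimizing sequence to feed into that compactness step.
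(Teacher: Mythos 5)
Your argument is correct and complete: it is the standard direct-method proof (a bounded minimizing sequence drawn from the sublevel set, weak sequential compactness via Kakutani and Eberlein--\v{S}mulian, and weak lower semicontinuity of convex lower semicontinuous functions via Mazur's separation argument), which is essentially the proof in the cited reference \cite{Ekeland83}; the paper itself states this lemma only as a citation and gives no proof of its own. The one point to tidy is your claim that $F(v_n)\le M$ for all large $n$: this is immediate when $\inf_V F<M$, while in the remaining case $\inf_V F=M$ any point of the nonempty sublevel set is already a minimizer, so the lemma is trivial there and your argument goes through.
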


Next, we establish a representer theorem for the regularization network.

\begin{theorem}\label{Theorem3} (Representer theorem) Assume the same assumptions as in Theorem \ref{Theorem2}. Then the regularization network (\ref{RN1}) possesses a unique solution $f_{{\bf v}_{\inf}}$ where ${\bf v}_{\inf}\in\cW_2$ satisfies
$$
\cG({\bf v}_{\inf})\in \big((\Phi_1(\bx))^{\vdash} \big)^{\perp}.
$$
\end{theorem}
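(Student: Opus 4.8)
The plan is to reduce the regularization network (\ref{RN2}) to a problem amenable to the two main tools already developed: Lemma \ref{Lemma34} for existence, and the orthogonality characterization from the proof of Theorem \ref{Theorem2} for the representer form. First I would define the objective functional
$$
F(v):=L_{\bt}\big((\langle \Phi_1(x_j),v\rangle_{\cW_1\times\cW_2}:j\in\bN_m)\big)+\lambda\phi(\|v\|_{\cW_2}),\ \ v\in\cW_2,
$$
and verify its hypotheses for Lemma \ref{Lemma34}. The space $\cW_2$ is reflexive by assumption. Convexity of $F$ follows because each $v\mapsto\langle \Phi_1(x_j),v\rangle_{\cW_1\times\cW_2}$ is linear (so $v\mapsto(f_v(x_j):j)$ is affine-linear into $\bC^m$), $L_{\bt}$ is convex, $\phi$ is convex and nondecreasing, and $v\mapsto\|v\|_{\cW_2}$ is convex; composition and summation preserve convexity. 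Lower semi-continuity follows from continuity of the bilinear form in $v$, continuity of $L_{\bt}$ and $\phi$, and continuity of the norm. For the sublevel boundedness, I would fix any $v_0\in V_{\bx,\bt}$ (nonempty by Lemma \ref{Lemma31}, using linear independence of $\{\Phi_1(x_j)\}$) and set $M:=F(v_0)$; since $L_{\bt}\ge0$, the sublevel set $\{F\le M\}$ forces $\lambda\phi(\|v\|_{\cW_2})\le M$, and because $\phi$ is strictly increasing with $\phi(t)\to+\infty$, this bounds $\|v\|_{\cW_2}$. Hence Lemma \ref{Lemma34} yields a minimizer ${\bf v}_{\inf}$.

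For uniqueness, I would argue that $F$ is strictly convex along the relevant directions. The subtlety is that $L_{\bt}$ need only be convex, not strictly so, and the map $v\mapsto(f_v(x_j):j)$ has a kernel, namely $V_{\bx,0}=(\Phi_1(\bx))^{\vdash}$. However, $\phi$ is strictly increasing and $\cW_2$ is strictly convex, so $v\mapsto\phi(\|v\|_{\cW_2})$ is strictly convex; thus if two distinct minimizers $v_1\ne v_2$ existed, the midpoint would strictly lower the regularization term while not increasing the loss term (by convexity of $L_{\bt}$), contradicting minimality. This gives uniqueness of ${\bf v}_{\inf}$, and hence of $f_{{\bf v}_{\inf}}$.

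For the representer characterization, I would mirror the argument in the proof of Theorem \ref{Theorem2}. The key observation is that ${\bf v}_{\inf}$ is simultaneously the minimizer of the norm-type penalty over the affine feasibility structure. Concretely, for every $v\in V_{\bx,0}=(\Phi_1(\bx))^{\vdash}$ and every $\tau\in\bC$, the perturbed point ${\bf v}_{\inf}+\tau v$ leaves all the values $\langle \Phi_1(x_j),{\bf v}_{\inf}+\tau v\rangle_{\cW_1\times\cW_2}=\langle\Phi_1(x_j),{\bf v}_{\inf}\rangle_{\cW_1\times\cW_2}$ unchanged, so the loss term $L_{\bt}$ is constant along this line; optimality then forces $\phi(\|{\bf v}_{\inf}+\tau v\|_{\cW_2})\ge\phi(\|{\bf v}_{\inf}\|_{\cW_2})$, and since $\phi$ is strictly increasing this yields $\|{\bf v}_{\inf}+\tau v\|_{\cW_2}\ge\|{\bf v}_{\inf}\|_{\cW_2}$ for all $\tau\in\bC$. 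This is exactly the orthogonality relation of Lemma \ref{Lemma33}, so $\langle v,\cG({\bf v}_{\inf})\rangle_{\cW_2}=0$ for all $v\in V_{\bx,0}$, giving $\cG({\bf v}_{\inf})\in V_{\bx,0}^{\perp}=\big((\Phi_1(\bx))^{\vdash}\big)^{\perp}$ via (\ref{Vx0}).

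The step I expect to be the main obstacle is the uniqueness argument, since the loss term alone is degenerate on the subspace $V_{\bx,0}$ and I must lean entirely on strict convexity of the regularization penalty $\phi(\|\cdot\|_{\cW_2})$ to separate candidate minimizers; care is needed to confirm that strict convexity of $\cW_2$ together with strict monotonicity of $\phi$ genuinely rules out two distinct minimizers even when they differ by an element of $V_{\bx,0}$. Everything else is a routine verification of the Ekeland hypotheses and a direct reuse of the orthogonality machinery from Theorem \ref{Theorem2}.
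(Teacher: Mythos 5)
Your proposal is correct and, in its overall architecture, matches the paper's proof: existence via Lemma \ref{Lemma34} on a nonempty bounded sublevel set, uniqueness via a midpoint argument exploiting rotundity, and the characterization of $\cG({\bf v}_{\inf})$ via James orthogonality and Lemma \ref{Lemma33}. The one place you genuinely diverge is the last step. The paper does not argue directly on the minimizer of (\ref{RN2}); instead it forms the interpolation data $D=\{(x_j,f_v(x_j)):j\in\bN_m\}$ from the regularization minimizer $f_v$, invokes Theorem \ref{Theorem2} to produce the minimal-norm interpolant $v_{\inf}$ of $D$ (which already satisfies $\cG(v_{\inf})\in((\Phi_1(\bx))^{\vdash})^{\perp}$), and then uses uniqueness of the regularization minimizer to conclude $f_{v_{\inf}}=f_v$. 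You instead observe that for $v\in V_{\bx,0}$ the perturbation ${\bf v}_{\inf}+\tau v$ leaves the loss term unchanged, so minimality and strict monotonicity of $\phi$ force $\|{\bf v}_{\inf}+\tau v\|_{\cW_2}\ge\|{\bf v}_{\inf}\|_{\cW_2}$ for all $\tau$, which is exactly the hypothesis of Lemma \ref{Lemma33}. Your route is shorter and makes the orthogonality mechanism explicit; the paper's route buys the conceptual statement that the regularization minimizer is the minimal-norm interpolant of its own sample values, at the cost of a detour through Theorem \ref{Theorem2}. Both are valid, and your existence verification (working on $\cW_2$ with $M=F(v_0)$ rather than on $\cB_1$ with $M=\cE_{\bz,\lambda}(0)$) is an immaterial variant.

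The obstacle you flag in the uniqueness step is real: $v\mapsto\phi(\|v\|_{\cW_2})$ is \emph{not} strictly convex under the stated hypotheses, because rotundity yields the strict triangle inequality $\|\tfrac12 v_1+\tfrac12 v_2\|_{\cW_2}<\tfrac12\|v_1\|_{\cW_2}+\tfrac12\|v_2\|_{\cW_2}$ only when $v_1$ and $v_2$ are not positive scalar multiples of one another, and $\phi$ is assumed convex and strictly increasing but not strictly convex (e.g.\ $\phi(t)=t$ is admissible, and then the penalty is affine along every ray). You should be aware, however, that the paper's own uniqueness argument makes exactly the same unsupported strict inequality, so this does not distinguish your proof from theirs; to make either argument airtight one must handle the colinear case $v_2=cv_1$, $c>0$, separately (or strengthen the hypotheses on $\phi$ or $L_{\bt}$). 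Apart from this shared caveat, your proposal is a faithful and slightly more direct rendering of the paper's argument.
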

\begin{proof} We first prove the uniqueness by contradiction.  Assume that there are two different minimizers $f_1,f_2\in\cB_1$ for (\ref{RN1}). Let $f_3:=\frac12(f_1+f_2)$.
Since $\cW_2$ is reflexive, strictly convex and G\^{a}teaux differentiable,  $\cB_1$ is reflexive, strictly convex and G\^{a}teaux differentiable as well. By the strict convexity of $\cB_1$, we have
$$
\cE_{\bz,\lambda}(f_3)=L_{\bt}\Big(\frac12f_{1}(\bx)+\frac12 f_{2}(\bx)\Big)+\lambda \phi\Big(\Big\|\frac12f_1+\frac12f_2\Big\|_{\cB_1}\Big)
<L_{\bt}\Big(\frac12f_{1}(\bx)+\frac12 f_{2}(\bx)\Big)+\lambda\phi\Big(\frac{\|f_1\|_{\cB_1}}{2}+\frac{\|f_2\|_{\cB_1}}{2}\Big).
$$
By assumptions on $L_{\bt}$ and $\phi$, it follows that
$$
\cE_{\bz,\lambda}(f_3)<\frac12 L_{\bt}(f_{1}(\bx))+\frac12 L_{\bt}(f_{2}(\bx))+\frac{\lambda}{2}\phi(\|f_1\|_{\cB_1})+\frac{\lambda}{2}\phi(\|f_2\|_{\cB_1})=
\frac12\cE_{\bz,\lambda}(f_1)+\frac12\cE_{\bz,\lambda}(f_2)=\cE_{\bz,\lambda}(f_1),
$$
contradicting that $f_1$ is a minimizer.

Next, we shall show the existence. If $f\in\cB_1$ satisfies $\|f\|_{\cB_1}>\phi^{-1}(\frac{\cE_{\bz,\lambda}(0)}{\lambda})$ then
$$
\cE_{\bz,\lambda}(f)\ge \lambda\phi(\|f\|_{\cB_1})>\cE_{\bz,\lambda}(0).
$$
Thus,
$$
\inf_{f\in\cB_1}\cE_{\bz,\lambda}(f)=\inf_{f\in E}\cE_{\bz,\lambda}(f),\mbox{ where }E:=\Big\{f\in\cB_1:\ \|f\|_{\cB_1}\le \phi^{-1}\Big(\frac{\cE_{\bz,\lambda}(0)}{\lambda}\Big) \Big\}.
$$
Clearly, $E$ is nonempty and bounded in the reflexive Banach space $\cB_1$. Observe that $\cE_{\bz,\lambda}$ is convex and continuous on $\cB_1$.
By Lemma \ref{Lemma34}, $\cE_{\bz,\lambda}$ attains its minimum on $\cB_1$.

Finally, suppose that $f_v=\langle \Phi_1(\cdot),v\rangle_{\cW_1\times\cW_2}\in\cB_1$ is the minimizer for (\ref{RN1}). We set
$$
D:=\{(x_j, f_v(x_j)):j\in\bN_m\}.
$$
By Theorem \ref{Theorem2}, there exists a unique solution $v_{\inf}\in\cW_2$ for the minimal norm interpolation (\ref{Interpolation}) with the samples $D$, and it satisfies (\ref{vinf}). It follows that  $f_{v_{\inf}}=\langle \Phi_1(\cdot), v_{\inf}\rangle_{\cW_1\times\cW_2}$ interpolates the sample data $D$ and for all $v\in\cW_2$
$$
\|v_{\inf}\|_{\cW_2}\le \|v\|_{\cW_2}.
$$
Thus, $f_{v_{\inf}}(\bx)=f_v(\bx)$ and
$$
\|f_{v_{\inf}}\|_{\cB_1}=\|v_{\inf}\|_{\cW_2}\le \|v\|_{\cW_2}=\|f_{v}\|_{\cB_1}.
$$
As $\phi$ is increasing, we get
$$
\cE_{\bz,\lambda}(f_{v_{\inf}})=L_{\bt}(f_{v_{\inf}} (\bx)+\lambda \phi(\|f_{v_{\inf}}\|_{\cB_1})\le L_{\bt}(f_v (\bx))+\lambda \phi(\|f_{v}\|_{\cB_1})= \cE_{\bz,\lambda}(f_{v}).
$$
By uniqueness of minimizer, $f_{v_{\inf}}=f_v$. As a consequence, ${\bf v}_{\inf}$ defined by (\ref{RN2}) satisfies (\ref{vinf}). The proof is complete.
\end{proof}

When $\cW_2=\cW_1^*$, the above result can be simplified.
\begin{corollary}\label{CorollaryRepresenter1} Assume the same assumptions as in Theorem \ref{Theorem2}. If, in addition, $\cW_2=\cW_1^*$ then the regularization network (\ref{RN1}) possesses a unique solution $f_{{\bf v}_{\inf}}$ where ${\bf v}_{\inf}\in\cW_2$ satisfies
 $$
 \cG({\bf v}_{\inf})\in  \span \Phi_1(\bx).
 $$
\end{corollary}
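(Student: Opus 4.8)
The plan is to obtain this corollary as a direct specialization of Theorem~\ref{Theorem3}, doing no new work on existence or uniqueness. Theorem~\ref{Theorem3} already asserts, under the hypotheses of Theorem~\ref{Theorem2}, that the regularization network (\ref{RN1}) has a unique minimizer $f_{{\bf v}_{\inf}}$ with ${\bf v}_{\inf}\in\cW_2$ satisfying $\cG({\bf v}_{\inf})\in\big((\Phi_1(\bx))^{\vdash}\big)^{\perp}$. Hence the entire task reduces to rewriting this membership in the sharper form $\cG({\bf v}_{\inf})\in\span\Phi_1(\bx)$ once the extra hypothesis $\cW_2=\cW_1^*$ is imposed. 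This mirrors precisely the argument already used for the minimal-norm-interpolation corollary following Theorem~\ref{Theorem2}.

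First I would note that when $\cW_2=\cW_1^*$ with the canonical pairing $\langle u,v\rangle_{\cW_1\times\cW_2}=v(u)$, the two annihilator operations introduced just before Lemma~\ref{Lemma31} agree on subsets of $\cW_1$: for the finite set $A=\Phi_1(\bx)=\{\Phi_1(x_j):j\in\bN_m\}$ one has $A^{\vdash}=\{v\in\cW_1^*:v(a)=0\text{ for all }a\in A\}=A^{\perp}$. Consequently the conclusion of Theorem~\ref{Theorem3} becomes $\cG({\bf v}_{\inf})\in\big((\Phi_1(\bx))^{\perp}\big)^{\perp}$, where the outer annihilator is formed in $\cW_2^{*}=(\cW_1^{*})^{*}$.

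The substantive step is then to identify this double annihilator with $\span\Phi_1(\bx)$. Since $\cW_2$ is reflexive by the standing assumptions, its predual $\cW_1$ is reflexive as well (a Banach space is reflexive iff its dual is), so $\cW_2^{*}=\cW_1^{**}$ may be identified with $\cW_1$, and the outer annihilator, which a priori lives in $\cW_1^{**}$, actually coincides with the pre-annihilator taken inside $\cW_1$. The bipolar (double-annihilator) theorem then yields $\big((\Phi_1(\bx))^{\perp}\big)^{\perp}=\overline{\span}\,\Phi_1(\bx)$, the norm closure of the span in $\cW_1$. Because $\Phi_1(\bx)$ is a finite set, $\span\Phi_1(\bx)$ is finite-dimensional and hence closed, so $\overline{\span}\,\Phi_1(\bx)=\span\Phi_1(\bx)$, giving the claimed containment.

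I expect the only delicate point to be the careful bookkeeping of the two layers of duality: one must confirm that reflexivity of $\cW_2$ genuinely forces reflexivity of $\cW_1$, so that the outer $\perp$ returns to $\cW_1$ and the bipolar theorem applies in the intended form. The remaining ingredients, namely the coincidence of $\vdash$ and $\perp$ when $\cW_2$ is the full dual and the closedness of a finite-dimensional subspace, are routine.
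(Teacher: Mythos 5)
Your proposal is correct and follows essentially the same route as the paper: the paper states this corollary without an explicit proof, but its proof of the analogous corollary for minimal norm interpolation is exactly the specialization you describe, namely invoking the representer theorem and then using $((\Phi_1(\bx))^{\perp})^{\perp}=\span\Phi_1(\bx)$. Your version merely fills in the details the paper leaves implicit (the coincidence of $\vdash$ and $\perp$ when $\cW_2=\cW_1^*$, the reflexivity of $\cW_1$ needed to bring the outer annihilator back into $\cW_1$, and the closedness of the finite-dimensional span), all of which are accurate.
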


The above result covers representer theorems in existing RKBSs \cite{XY,ZXZ09,ZZ12}. Especially, it covers the classical representer theorem for regularization networks in an RKHS.

\vspace{0.5cm}
{\small
\bibliographystyle{amsplain}

}

\end{document}